\title[Joint Synthesis of Safety Certificates and Safe Control Policies]{Joint Synthesis of Safety Certificate and Safe Control Policy using Constrained Reinforcement Learning}
\newcommand{\Ss}{\mathcal{S}}
\newcommand{\Ll}{\mathcal{L}}
\newcommand{\E}{\mathbb{E}}
\newcommand{\policy}{\pi}
\newcommand{\piparas}{\theta}
\newcommand{\qparas}{w}
\newcommand{\lamparas}{\xi}
\newcommand{\state}{s}
\newcommand{\st}{{\state_t}}
\newcommand{\stp}{{\state_{t+1}}}
\newcommand{\action}{a}
\newcommand{\at}{{\action_t}}
\newtheorem{assumption}{Assumption}
\author{%
 \Name{Haitong Ma} \Email{maht19@mails.tsinghua.edu.cn}\\
 \addr School of Vehicle and Mobility, Tsinghua University, Beijing 100084, China
 \AND
 \Name{Changliu Liu} \Email{cliu6@andrew.cmu.edu}\\
 \addr Robotics Institute, Carnegie Mellon University, Pittsburgh, PA 15213, USA
 \AND
 \Name{Shengbo Eben Li} \Email{lishbo@tsinghua.edu.cn}\AND
 \Name{Sifa Zheng} \Email{zsf@tsinghua.edu.cn}\\
 \addr School of Vehicle and Mobility, Tsinghua University, Beijing 100084, China
 \AND
 \Name{Jianyu Chen} \Email{jianyuchen@tsinghua.edu.cn}\\
 \addr Institute of Interdisciplinary Information Science, Tsinghua University, Beijing 100084, China\\
 \addr Shanghai Qizhi Institute, Shanghai 200000, China
}
\begin{document}

\maketitle

\begin{abstract}%
Safety is the major consideration in controlling complex dynamical systems using reinforcement learning (RL), where the \emph{safety certificates} can provide provable safety guarantees. A valid safety certificate is an energy function indicating that safe states are with low energy, and there exists a corresponding \emph{safe control policy} that allows the energy function to always dissipate.
The safety certificates and the safe control policies are closely related to each other and both challenging to synthesize. Therefore, existing learning-based studies treat either of them as prior knowledge to learn the other, limiting their applicability to general systems with unknown dynamics. This paper proposes a novel approach that simultaneously synthesizes the energy-function-based safety certificates and learns the safe control policies with constrained reinforcement learning (CRL). We do not rely on prior knowledge about \emph{either} a prior control law \emph{or} a perfect safety certificate. 
In particular, we formulate a loss function to optimize the safety certificate parameters by minimizing the occurrence of energy increases. By adding this optimization procedure as an outer loop to the Lagrangian-based CRL, we jointly update the policy and safety certificate parameters, and prove that they will converge to their respective local optima, the optimal safe policies and valid safety certificates.  Finally, we evaluate our algorithms on multiple safety-critical benchmark environments. The results show that the proposed algorithm learns solidly safe policies with no constraint violation. The validity or feasibility of synthesized safety certificates is also verified numerically.
\end{abstract}

\begin{keywords}%
  Safety Certificates, Safe Control, Energy functions, Safety Index Synthesis, Constrained Reinforcement Learning, Multi-Objective Learning
\end{keywords}

\section{Introduction}
Safety is critical when applying state-of-the-art artificial intelligence studies to real-world applications, like autonomous driving \citep{sallab2017deep,chen2021interpretable}, robotic control \citep{richter2019open,ray2019benchmarking}.
Safe control is one of the most common tasks among these real-world applications, requiring that the hard safety constraints must be obeyed persistently. However, learning a safe control policy is hard for the naive trial-and-error mechanism of RL since it penalizes the dangerous actions \emph{after} experiencing them.

Meanwhile, in the control theory, there exist studies about \emph{energy-function-based} provable safety guarantee of dynamic systems called the \emph{safety certificate}, or \emph{safety index} \citep{wieland2007constructive,ames2014control,liu2014control}. These methods first synthesize energy functions such that the safe states have low energy, then design control laws satisfying the \emph{safe action constraints} to make the systems dissipate energy \citep{wei2019safe}. 
If there exists a feasible policy for \emph{all states} in a safe set to satisfy the \emph{safe action constraints} dissipating the energy, the system will never leave the safe set (i.e., forward invariance). 
Despite its soundness, the safety index synthesis (SIS) by hand is extremely hard for complex systems, which stimulates a rapidly growing interest in learning-based SIS \citep{chang2020neural,saveriano2019learning,srinivasan2020synthesis,ma2021model,qin2021learning}. Nevertheless, These studies usually require known dynamical models (white-box, black-box or learning-calibrated) to design control laws. Furthermore, obtaining the policy satisfying safe action constraints is also challenging. Adding \emph{safety shields or layers} to obtain supervised RL policies is a common approach \citep{wang2017safety,Agrawal2017a,cheng2019end,taylor2020learning}, but these studies usually assume to know the valid safety certificates.

In general safe control tasks with unknown dynamics, one usually has access to \emph{neither} the control laws \emph{nor} perfect safety certificates, which makes the previous two kinds of studies fall into a paradox---they rely on each other as the prior knowledge. Therefore, this paper proposes a novel algorithm without prior knowledge about model-based control laws or valid safety certificates. We define a loss function for SIS by minimizing the occurrence of energy increases. Then we formulate a CRL problem (rather than the commonly used shield methods) to unify the loss functions of SIS and CRL. By adding SIS as an outer loop to the Lagrangian-based solution to CRL, we jointly update the policies and safety certificates, and prove that they will converge to their respective local optima, the optimal safe policies and the valid safety certificates.

\textit{\textbf{Contributions.}} Our main contributions are: 1. We propose an algorithm of joint CRL and SIS that learns the safe policies and synthesizes the safety certificates simultaneously. This is the first algorithm requiring no prior knowledge of control laws or valid safety certificates. 2. We unify the loss function formulations of SIS and CRL. We therefore can form the multi-timescale adversarial RL training and prove its convergence. 3. We evaluate the proposed algorithm on multiple safety-critical benchmark environments Results demonstrate that we can simultaneously synthesize valid safety certificates and learn safe policies with zero constraint violation.

\subsection{Related works}
Representative energy-function-based safety certificates include barrier certificates \citep{prajna2007framework}, control barrier functions (CBF) \citep{wieland2007constructive}, safety set algorithm (SSA) \citep{liu2014control} and sliding mode methods \citep{gracia2013reactive}. 
Recent learning-based studies can be mainly divided into \emph{learning-based SIS} and \emph{learning safe control policies supervised by certificates}.   
\citet{chang2020neural,luo2021learning} use explicit models to rollout or project actions to satisfy safe action constraints. \citet{jin2020neural,qin2021learning} guide certificate learning with LQR controllers, \citet{anonymous2021modelfree} requires a black-box model to query online, and \citet{saveriano2019learning,srinivasan2020synthesis} use labeled data to fit certificates with supervised learning. 
The latter one, learning safe policy with supervisory usually assumed a valid safety certificate \citep{wang2017safety,cheng2019end,taylor2020learning}. It's a natural thought that one could learn the dynamic models to handle these issues \citep[like][]{cheng2019end,luo2021learning}, but learning models is much more complex than only learning policies, especially in RL tasks. 

\section{Problem Formulations}
We consider the safety specification that the system state $s$ should be constrained in a connected and closed set $\Ss_s$ which is called the \emph{safe set}. $\Ss_s$ should be a zero-sublevel set of a safety index function $\phi_0(\cdot)$ denoted by $\Ss_s = \{s|\phi_0(s)\leq0\}$. We study the Markov decision process (MDP) with deterministic dynamics (a reasonable assumption when dealing with safe control problems), defined by the tuple $(\Ss, \mathcal{A}, \mathcal{F}, r, c, \gamma,\phi)$, where $\Ss,\mathcal{A}$ is the state and action space, $\mathcal{F}: \Ss\times \mathcal{A}\to\Ss$ is the unknown system dynamics, $r,c:\mathcal{S} \times \mathcal{A} \times \mathcal{S} \rightarrow \mathbb{R}$ is the reward and cost function, $\gamma$ is the discounted factor, and $\phi:\Ss\to\mathbb{R}$ is the energy-function-based safety certificate, or called the \emph{safety index}. 
A safe control law with respect to safety index $\phi$ should keep the system energy low, ($\phi\leq0$) and dissipate the energy when the system is at high energy ($\phi> 0$). We use $s'$ to represent the next state for simplicity.
	 Then we can get the \emph{safe action constraint}:
	 \begin{definition}[Safe action constraint] For a given safety index $\phi$, the safe action constraint is
	 	\begin{equation}
	 		\phi(s')<\max \{\phi(s)-\eta_D, 0\}
	 		\label{eq:cstr0}
	 	\end{equation}
 	where $\eta_D$ is a slack variable controlling the descent rate of safety index. 
	 \end{definition}
 	\begin{definition}[Valid safety certificate]
 		If there always exists an action $a\in\mathcal{A}$ satisfying \eqref{eq:cstr0} at $s$, or the safe action set $\mathcal{U}_s(s)=\{a|\phi(s')<\max \{\phi(s)-\eta_D, 0\}\}$ is always nonempty, we say the safety index $\phi$ is a \textbf{valid}, or \textbf{feasible} safety certificate.
 	\end{definition}
    \begin{figure}
        \centering
        \includegraphics[width=0.5\linewidth]{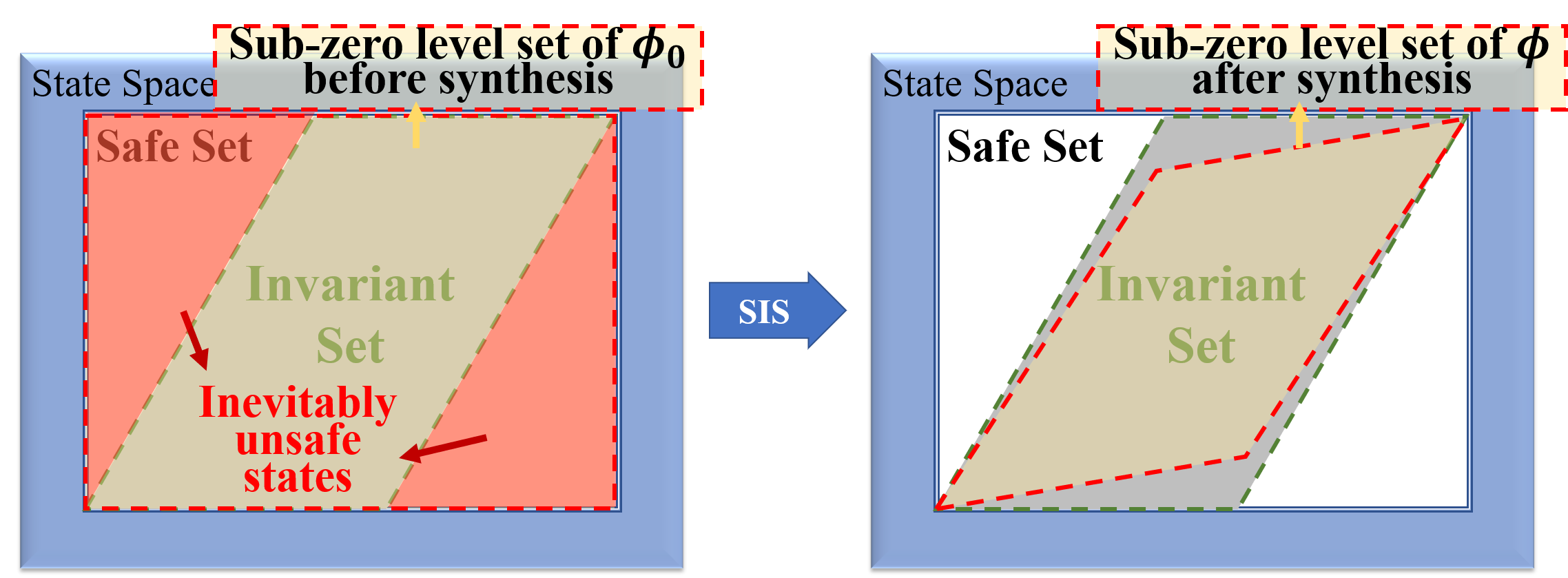}
        \caption{Safety index synthesis (SIS). Inevitably unsafe states should be excluded during the SIS.}
        \label{fig:my_label}
         \label{fig:intro}
    \end{figure}

	 A straightforward approach is to use the $\phi_0$ as the safety certificate. However, these safe action constraints are possibly not satisfied with all the states in $\Ss_s$ as shown in \figureref{fig:intro}. This problem is common in real-world tasks with actuator saturation and high relative-degree from safety specifications to control inputs (i.e., $\|\frac{\partial\phi_0}{\partial u}\|=0$). For example, if $\phi_0$ measures the distance between two autonomous vehicles, the collision may be inevitable because the relative speed is too high and brake force is limited. In this case, $\Ss_s$ includes inevitably unsafe states. We need to assign high energy values to these inevitably-unsafe states, for example, by linearly combining the $\phi_0$ and its high-order derivatives \citep{liu2014control}. The valid safety certificate will guarantee safety by ensuring the \emph{forward invariance} of a subset of $\Ss_s$. 
	 \begin{lemma}[Forward invariance \citep{liu2014control}]
	 	Define the zero-sublevel set of a valid safety index $\phi$ as $\Ss_s^\phi=\{s|\phi(s)\leq0\}$. If $\phi$ is a valid safety certificate, then there exist policies to guarantee the forward invariance of $\Ss_s^\phi\cap\Ss_s$.
	 \end{lemma}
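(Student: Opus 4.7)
The plan is a one-step induction on time using the validity of $\phi$. I first construct a safe feedback: since $\phi$ is valid, the set-valued map $s\mapsto\mathcal{U}_s(s)$ is nonempty everywhere, so there exists a selection $\pi(s)\in\mathcal{U}_s(s)$ (any pointwise choice suffices here because the dynamics $\mathcal{F}$ are deterministic; a measurable-selection theorem would only be needed if one additionally wanted $\pi$ to come from a regular MDP policy class). Fix any $s_t\in\Ss_s^\phi\cap\Ss_s$ and apply $a_t=\pi(s_t)$. Then the safe action constraint \eqref{eq:cstr0} yields
\[
\phi(s_{t+1}) \;<\; \max\{\phi(s_t)-\eta_D,\;0\}.
\]
Because $s_t\in\Ss_s^\phi$ means $\phi(s_t)\leq0$ and $\eta_D\geq0$, the right-hand side collapses to $0$, so $\phi(s_{t+1})<0$ and hence $s_{t+1}\in\Ss_s^\phi$.

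To close the loop I still need $s_{t+1}\in\Ss_s$. I would invoke the construction of $\phi$ described in the paragraph immediately preceding the lemma: $\phi$ is assembled from $\phi_0$ and its higher-order derivatives precisely so that the inevitably-unsafe portion of $\Ss_s$ (shaded in \figureref{fig:intro}) is assigned strictly positive energy. This design implies the pointwise dominance $\phi_0\leq\phi$ on the relevant domain, equivalently $\Ss_s^\phi\subseteq\Ss_s$. Combined with the previous step, $\phi(s_{t+1})<0$ gives $\phi_0(s_{t+1})\leq0$, so $s_{t+1}\in\Ss_s^\phi\cap\Ss_s$. Iterating on $t$ delivers forward invariance of $\Ss_s^\phi\cap\Ss_s$ under $\pi$.

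The main obstacle, in my view, is the second step rather than the first. The one-step descent is essentially a restatement of the definition of validity. By contrast, the containment $\Ss_s^\phi\subseteq\Ss_s$ is only \emph{implicit} in the excerpt---it is motivated by the discussion of inevitably-unsafe states and the standard combination with high-order derivatives, but never stated as a formal hypothesis. A fully self-contained proof should therefore either (i) promote $\Ss_s^\phi\subseteq\Ss_s$ to an explicit assumption on how the safety index is parameterized, or (ii) weaken the conclusion of the lemma to forward invariance of $\Ss_s^\phi$ alone and defer the containment to the SIS loss introduced later in the paper, which penalizes exactly the occurrence of states where $\phi_0$ is violated while $\phi$ is not. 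Either route preserves the induction above; the substantive content of the lemma lies entirely in that structural assumption on $\phi$.
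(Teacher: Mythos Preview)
The paper does not supply its own proof of this lemma; it is quoted verbatim from \citet{liu2014control} and treated as background. Your one-step induction---select $\pi(s)\in\mathcal{U}_s(s)$ by validity, then note that $\phi(s_t)\le 0$ collapses the right-hand side of \eqref{eq:cstr0} to $0$, giving $\phi(s_{t+1})<0$---is exactly the standard argument in that reference for forward invariance of $\Ss_s^\phi$, so on the first step there is nothing to compare: you have reproduced the intended reasoning.

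Your diagnosis of the second step is also correct and worth recording. The containment $\Ss_s^\phi\subseteq\Ss_s$ is a property of how $\phi$ is \emph{constructed}, not a consequence of validity as defined here (Definition~2 only requires $\mathcal{U}_s(s)\neq\emptyset$ and says nothing about $\phi_0$). In the original SSA construction the containment is enforced by design, and the present paper gestures at it in the paragraph before the lemma, but never states it as a hypothesis; indeed, for the concrete parameterization \eqref{eq:sis} used later, one can have $\phi(s)<0$ while $\phi_0(s)>0$ whenever $\dot d$ is large and positive, so the inclusion can fail for some $(\sigma,n,k)$. Of your two proposed fixes, route (i)---promoting $\Ss_s^\phi\subseteq\Ss_s$ to an explicit structural assumption on the safety index---is the one that matches the cited result and preserves the lemma as stated; route (ii) would weaken the conclusion and is not what the reference proves.
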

	
	We therefore can formulate the CRL problem by adding the safe action constraints to RL optimization objective:
	\begin{equation}
		\max_{\pi}\  \E_{\tau\sim\pi}\Big\{\sum\nolimits_{t=0}^{\infty}\gamma^t r_t\Big\} = \E_{\state}\big\{v^{\pi}(\state)\big\}
		\quad	\text{s.t.} \    \phi(s')-\max \{\phi(s)-\eta_D, 0\}<0, \forall s \in \mathcal{D}
		\label{eq:statewiseop}
	\end{equation}
	where $v^\pi(s)$ is the state-value function of $s$, $\mathcal{D}=\{s|f(s)>0\}$ is the set of all possible states ($f$ is the state distribution density). 
	\begin{remark}
\eqref{eq:statewiseop} has \textbf{state-dependent} constraints; it can not be solved by previous model-free CRL since their constraint objectives are defined on the \textbf{expectation} over $\mathcal{D}$ \citep{uchibe2007constrained,achiam2017constrained,chow2017risk,tessler2018reward,ray2019benchmarking,stooke2020responsive}. 
\end{remark}

	We solve \eqref{eq:statewiseop} based on our previous framework to solve state-dependent safety constraints in \citep{ma2021feasible} using the Lagrange multiplier networks $\lambda(s)$ in a Lagrangian-based approach. The Lagrange function is \citep{ma2021feasible}
	\begin{equation}
		\mathcal{L}'(\pi,\lambda) = \E_{\state}\big\{-v^{\pi}(\state) + \lambda(\state)\big(\phi(s')-\max \{\phi(s)-\eta_D, 0\}\big)\big\}
		\label{eq:SL2}
	\end{equation}

	We can solve \eqref{eq:statewiseop} by locating the saddle point of $\Ll'(\pi,\lambda)$, $\max _{\lambda} \min _{\pi}\Ll'(\pi,\lambda)$.

\section{Joint Synthesis of Safety Certificate and Safe Control Policy}
The key idea of this section is to unify the loss functions of CRL and SIS; we provide theoretical analyses of their equivalence.
\label{sec:joint}
\subsection{Loss Function for Safety Index Synthesis}

We construct the loss for optimizing a parameterized safety index by a measurement of the \emph{violation of constraint} \eqref{eq:cstr0} 
\begin{equation}
	J(\phi)=\left.\mathbb{E}_{s}\Big\{\left[\phi(s')-\max \{\phi(s)-\eta_D, 0\}\right]^{+}\Big\}\right|_{\pi=\pi^*}
	\label{eq:philoss1}
\end{equation}
where $[\cdot]^+$ means projecting the values to the positive half-space $[0, +\infty)$, $\pi^*$ is the optimal safe policy (also a feasible policy when $\phi$ is a valid safety index) of \eqref{eq:statewiseop}, and $\cdot|_{\pi=\pi^*}$ represents the agent takes $\pi^*(s)$ to reach $s'$.
Ideally, if $\phi$ is a valid safety index, there always exists control to satisfy \eqref{eq:cstr0}, and $J(\phi)=0$. For those imperfect $\phi$, the inequality constraint in \eqref{eq:statewiseop} may not hold for all states in $\mathcal{D}$, so we can optimize the loss to get better $\phi$. 

The joint synthesis algorithm is tricky since we need to handle \emph{two different optimization problems}, \eqref{eq:SL2} and \eqref{eq:philoss1}. Recent similar studies integrate two optimizations by weighted sum \citep{qin2021learning} or alternative update \citep{luo2021learning}, but their methods are more like intuitive approaches and lack a solid theoretical basis.

\subsection{Unified Loss Function for Joint Synthesis}

\begin{lemma}[Statewise complementary slackness condition \citep{ma2021feasible}]
	\label{prop:scsc}
	For the problem \eqref{eq:statewiseop}, if the safe action set is not empty at state $\state$, the optimal multiplier and optimal policy $\lambda^*,\pi^*$ satisfies
	\begin{equation}
		\lambda^*(\state)\{\phi(\state')- \max \{\phi(s)-\eta_D, 0\}|_{\pi*}\}=0
	\end{equation}
	If the safe action set is empty at state $s$, then $\lambda^*(s)\to \infty$.
\label{lemma:1}
\end{lemma}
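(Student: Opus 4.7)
My plan is to establish the lemma by a pointwise (statewise) KKT argument, exploiting that the Lagrangian $\Ll'$ in \eqref{eq:SL2} is linear in the multiplier function $\lambda(\cdot)$ and that $\lambda$ is free to be chosen independently at each $\state$. Writing $g^\pi(\state):=\phi(s')-\max\{\phi(\state)-\eta_D,0\}$ for the constraint residual along the trajectory induced by $\pi$, the Lagrangian decomposes as
\begin{equation*}
\Ll'(\pi,\lambda) = \E_{\state}\bigl\{-v^\pi(\state) + \lambda(\state)\,g^\pi(\state)\bigr\},
\end{equation*}
so the inner maximization $\max_{\lambda(\cdot)\ge 0}\Ll'(\pi^*,\lambda)$ splits into independent scalar problems $\max_{\lambda(\state)\ge 0}\lambda(\state)\,g^{\pi^*}(\state)$ at each $\state$ in the support of the state distribution.

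I would fix the optimal safe policy $\pi^*$ of \eqref{eq:statewiseop} and split into two cases. Case 1: if the safe action set at $\state$ is nonempty, then feasibility of $\pi^*$ at $\state$ gives $g^{\pi^*}(\state)\le 0$; the scalar maximizer $\lambda^*(\state)$ over the nonnegative halfline must equal $0$ whenever $g^{\pi^*}(\state)<0$ and may take any nonnegative value when $g^{\pi^*}(\state)=0$. In either subcase the product $\lambda^*(\state)\,g^{\pi^*}(\state)$ vanishes, which is exactly the statewise complementary slackness claim. Case 2: if the safe action set at $\state$ is empty, then no admissible policy can drive $g^\pi(\state)\le 0$, so $g^{\pi^*}(\state)>0$. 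The pointwise problem $\max_{\lambda(\state)\ge 0}\lambda(\state)\,g^{\pi^*}(\state)$ is then unbounded, forcing the saddle-point value to be achieved only in the limit $\lambda^*(\state)\to\infty$, which matches the second statement.

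The main obstacle will be rigorously justifying the pointwise decomposition in the nonconvex setting: one must ensure that the global saddle-point condition for the joint $(\pi,\lambda)$ problem reduces to the pointwise scalar KKT argument above. This relies on (i) the multiplier network $\lambda(\cdot)$ being expressive enough to realize any nonnegative function of $\state$ on the support of $f$, so that the outer $\max$ truly factorizes, and (ii) sufficient regularity of $v^\pi$ and $g^\pi$ in $\pi$ so that a (local) strong-duality-type statement applies to \eqref{eq:statewiseop}. Since the lemma is imported from \citep{ma2021feasible}, I would invoke the regularity assumptions made there and cite them explicitly; once the pointwise decomposition is licensed, the two-case argument above concludes the proof.
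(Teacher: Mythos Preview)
Your proposal is correct and takes essentially the same approach as the paper: the paper's entire justification is the single sentence ``The lemma comes from the Karush-Kuhn-Tucker (KKT) necessary conditions for the problem \eqref{eq:statewiseop},'' together with the citation to \citep{ma2021feasible}. Your pointwise decomposition via linearity in $\lambda(\cdot)$ and the two-case KKT analysis simply spells out what the paper leaves implicit in that one line.
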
 
\noindent The lemma comes from the Karush-Kuhn-Tucker (KKT) necessary conditions for the problem \eqref{eq:statewiseop}. 

 Consider the Lagrange function \eqref{eq:SL2} with the additional variable $\phi$ to optimize, 
\begin{equation}
		\Ll'(\pi,\lambda,\phi) = 
		\mathbb{E}_{s}\left\{-v^\pi(s)+\lambda(s)\big(\phi(\state')
		- \max \{\phi(s)-\eta_D, 0\}\big)\right\}
	\label{eq:lagwithphi}
\end{equation}
we have the following lemma for the relationship between the loss function of policy and certificate synthesis
\begin{lemma}
	If $\lambda$ is clipped into a compact set $[0,\lambda_{\max}]$, where $\lambda_{\max}>\max_{s\in\{s|\mathcal{U}_s(s)\neq\emptyset\}}\lambda^*(s)$. Then 
	\begin{equation}
		\Ll'(\pi^*,\lambda^*,\phi)= \lambda_{\max} J(\phi) + \Delta
	\end{equation}
	where $\Delta$ is a constant irrelevant with $\phi$.
	\label{lemma:propto}
\end{lemma}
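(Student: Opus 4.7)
The plan is to partition the integration domain according to whether the safe action set $\mathcal{U}_s(s)$ is empty, then show the $\lambda$-weighted constraint term collapses exactly onto $\lambda_{\max}J(\phi)$ because the statewise complementary slackness from Lemma \ref{lemma:1} kills contributions from feasible states while clipping pins the multiplier at $\lambda_{\max}$ on infeasible states. Concretely, I would write $\mathcal{D} = \mathcal{D}_f \cup \mathcal{D}_i$ with $\mathcal{D}_f = \{s: \mathcal{U}_s(s)\neq\emptyset\}$ and $\mathcal{D}_i = \{s: \mathcal{U}_s(s)=\emptyset\}$, and split \eqref{eq:lagwithphi} (evaluated at $(\pi^*,\lambda^*)$) into the reward term $-\mathbb{E}_s\{v^{\pi^*}(s)\}$ plus an $\mathcal{D}_f$-integral plus an $\mathcal{D}_i$-integral.

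On $\mathcal{D}_f$, Lemma \ref{lemma:1} gives $\lambda^*(s)\big(\phi(s')-\max\{\phi(s)-\eta_D,0\}\big)|_{\pi^*}=0$ pointwise, so this piece of the Lagrangian vanishes; moreover, because $\pi^*$ is feasible at such $s$, the bracketed quantity is nonpositive, so $[\,\cdot\,]^+=0$ and the same states contribute nothing to $J(\phi)$ either. On $\mathcal{D}_i$, every admissible action violates \eqref{eq:cstr0}, so the bracketed quantity is strictly positive under $\pi^*$ and $[\,\cdot\,]^+$ acts as the identity; simultaneously Lemma \ref{lemma:1} sends $\lambda^*(s)\to\infty$, and since the hypothesis $\lambda_{\max}>\sup_{s\in\mathcal{D}_f}\lambda^*(s)$ ensures the clip never binds on $\mathcal{D}_f$, the clipping replaces $\lambda^*(s)$ on $\mathcal{D}_i$ by exactly $\lambda_{\max}$. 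Factoring the constant out and combining the two regions yields
\begin{equation*}
\mathbb{E}_s\bigl\{\lambda^*(s)\bigl(\phi(s')-\max\{\phi(s)-\eta_D,0\}\bigr)\bigr\}\Big|_{\pi^*} \;=\; \lambda_{\max}\,J(\phi).
\end{equation*}
Adding the $-\mathbb{E}_s\{v^{\pi^*}(s)\}$ term gives $\Ll'(\pi^*,\lambda^*,\phi)=\lambda_{\max}J(\phi)+\Delta$ with $\Delta := -\mathbb{E}_s\{v^{\pi^*}(s)\}$.

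The last step is to justify that $\Delta$ is truly ``irrelevant with $\phi$''. Since $\pi^*$ implicitly depends on $\phi$ through the constraint set of \eqref{eq:statewiseop}, I would invoke the multi-timescale structure announced in the paper: when this lemma is used to define the $\phi$-update, $\pi^*$ (and $\lambda^*$) are frozen at their inner-loop fixed points, so $\Delta$ carries no $\phi$-gradient and behaves as a constant offset for the purpose of optimizing $\phi$. I would then state this explicitly to avoid ambiguity.

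The main obstacle I anticipate is precisely this last point, namely articulating the sense in which $\Delta$ is constant without conflating pointwise clipping with the original $\lambda^*(s)\to\infty$ on $\mathcal{D}_i$. Technically one must verify that multiplying $+\infty$ by the positive constraint residual and then clipping gives the same limit as clipping $\lambda^*$ first and then multiplying, which I would handle by taking $\lambda^*_k(s)\uparrow\infty$ along an approximating sequence of feasible Lagrangians and passing to the limit under the clip, using the compactness of $[0,\lambda_{\max}]$.
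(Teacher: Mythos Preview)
Your proposal is correct and follows essentially the same argument as the paper: partition states by whether $\mathcal{U}_s(s)$ is empty, use the statewise complementary slackness of Lemma~\ref{lemma:1} to kill the feasible-state contribution, use the clip to set $\lambda^*(s)=\lambda_{\max}$ on infeasible states, and identify $\Delta=-\mathbb{E}_s\{v^{\pi^*}(s)\}$. Your extra care about why $\Delta$ is $\phi$-independent (via the multi-timescale freezing of $\pi^*$) and about the clipping limit goes beyond what the paper's proof spells out, but the core decomposition is identical.
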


\begin{proof}
    See \appendixref{sec:proofpropto}\footnote{The full paper with Appendix can be found on \url{https://arxiv.org/abs/2111.07695}.}. 
\end{proof}
\begin{theorem}[Unified loss for joint synthesis] The optimal safety certificate parameters with optimal policy-multiplier tuple in \eqref{eq:lagwithphi} is also the optimal safety certificate parameters under loss in \eqref{eq:philoss1}
	\begin{equation}
		\arg\min J(\phi) = \arg\min \Ll'(\pi^*,\lambda^*,\phi)
		\label{eq:equal}
	\end{equation}
	\label{theorem:loss}
\end{theorem}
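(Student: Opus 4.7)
The plan is to derive this theorem as an immediate corollary of Lemma \ref{lemma:propto}. That lemma already does all the substantive work by showing $\mathcal{L}'(\pi^*, \lambda^*, \phi) = \lambda_{\max} J(\phi) + \Delta$ with $\Delta$ independent of $\phi$, so the remaining task is merely to verify that this affine relationship preserves the set of minimizers.

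My first step would be to invoke Lemma \ref{lemma:propto} to substitute the explicit expression for $\mathcal{L}'(\pi^*, \lambda^*, \phi)$ into the right-hand side of \eqref{eq:equal}. Next, I would note that $\lambda_{\max}$ is a fixed strictly positive constant by its construction in Lemma \ref{lemma:propto} as an upper bound on the nonnegative quantity $\max_s \lambda^*(s)$, and that $\Delta$ does not depend on $\phi$. Finally, I would apply the elementary observation that adding a constant or multiplying by a strictly positive constant leaves the set of minimizers unchanged, and conclude that $\arg\min_\phi \mathcal{L}'(\pi^*, \lambda^*, \phi) = \arg\min_\phi J(\phi)$, which is exactly the claimed identity.

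There is essentially no hard step here: all the content has been packaged into Lemma \ref{lemma:propto}, which itself rests on the statewise complementary slackness condition of Lemma \ref{lemma:1} (i.e., $\lambda^*(s)$ vanishes whenever the safe action constraint is strictly satisfied, so only the boundary terms matter). If any care were needed at all, it would be in handling the degenerate corner case $\lambda_{\max}=0$; but since the hypothesis of Lemma \ref{lemma:propto} forces $\lambda_{\max}$ to strictly exceed a nonnegative quantity, positivity is automatic and the reduction to argmin equivalence goes through without fuss.
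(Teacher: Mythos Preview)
Your argument is correct as a direct corollary of Lemma~\ref{lemma:propto} taken at face value: once $\mathcal{L}'(\pi^*,\lambda^*,\phi)=\lambda_{\max}J(\phi)+\Delta$ with $\lambda_{\max}>0$ and $\Delta$ independent of $\phi$ is granted, the equality of minimizers is immediate, exactly as you say. The paper, however, does not prove the theorem this way; its one-line proof simply cites the envelope theorem for parameterized constrained optimization \citep{milgrom2002envelope,rockafellar2015convex}. The distinction is that $\pi^*$ and $\lambda^*$ are implicitly functions of $\phi$ through the constraints in \eqref{eq:statewiseop}, so $\Delta=\mathbb{E}_s\{-v^{\pi^*(\phi)}(s)\}$ is not literally constant in $\phi$; the envelope theorem is precisely the device that justifies ignoring this indirect dependence when taking first-order conditions in $\phi$. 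Your route is more elementary and leans entirely on Lemma~\ref{lemma:propto} as stated, whereas the paper's route addresses the implicit $\phi$-dependence of $(\pi^*,\lambda^*)$ head-on. In practice the two are complementary: the envelope theorem is what makes the ``$\Delta$ irrelevant with $\phi$'' claim in Lemma~\ref{lemma:propto} rigorous, after which your affine-invariance reduction is the natural last step.
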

\begin{proof}
    See the \emph{envelope theorem} on parameterized constraints in \citet{milgrom2002envelope,rockafellar2015convex}.
\end{proof}
Finally, we unify the loss function of updating three elements: policy $\pi$, multiplier $\lambda$, and safety index function $\phi$.
The optimization problem is formulated by a multi-timescale adversarial training:
\begin{equation}
	\min _{\phi} \max _{\lambda} \min _{\pi} \Ll'(\pi,\lambda,\phi)
	\label{eq:minmaxmin}
\end{equation}
\section{Practical Algorithm using Constrained Reinforcement Learning}
In this section, we explain the practical algorithm and convergence analysis.
\label{sec:algo}
\subsection{Algorithm Details}
The Lagrangian-based solution to CRL with statewise safety constraint is compatible with any existing unconstrained RL baselines, and we select the off-policy maximum entropy RL framework like soft actor-critic \citep[SAC,][]{haarnoja2018soft}.  According to \eqref{eq:SL2}, we need to add two neural networks to learn a state-action value function for safety index model $Q_\phi(s, a)$ (to approximate $\phi(\state') - \max \{\phi(s)-\eta_D, 0\}$) and the multipliers $\lambda(s)$. Because the soft policy evaluation, or the soft Q-learning part, is the same as \citet{haarnoja2018soft}, we only demonstrate the soft policy iteration part of the algorithm in Algorithm \ref{alg:facspi}. We name the proposed algorithm as FAC-SIS\footnote{FAC refers to the FAC algorithm in our prior work\citep{ma2021feasible}.}. We denote the parameters of the policy network, multiplier network, safety index as $\theta,\xi,\zeta$, respectively. We use $G_\#,\ \#\in\{\theta,\xi,\zeta\}$ to denote the gradients to update policy, multiplier and certificate. Detailed computations of the gradients can be found in Appendix \ref{sec:grad}. In addition, we assign multiple delayed updates \citep[similar to][]{fujimoto2018addressing}, $m_\pi<m_\lambda<m_\phi$, to stabilize the adversarial optimizations. 
\begin{remark}
A general parameterization rule of safety index is to linearly combine $\phi_0$ and its high-order derivatives \citep{liu2014control}, $\phi=\phi_0 + k_1\dot{\phi_0} + \dots + k_n\phi_0^{(n)}$; the parameters $\xi$ is $[k_1,k_2,\dots,k_n]$. How many high-order derivatives are needed depends on the system relative-degree. For example, the relative-degree of position constraints with force inputs is $2$. This information should be included in observations of MDP. Otherwise, the observation can not fully describe how dangerous the agent is with respect to the safety constraint.
\end{remark}
\begin{small}
    \begin{algorithm}[h]
	\caption{Soft Policy Improvement in FAC-SIS}
	\label{alg:facspi}
	\begin{algorithmic}[1]
		\REQUIRE Buffer $\mathcal{D}$ with sampled data, policy parameters $\piparas$, multiplier parameters $\xi$, safety index parameters $\zeta$.
	    \STATE {\textbf{if} gradient steps \texttt{mod} $m_\pi$ $=0$ \textbf{then} } $\piparas \leftarrow \piparas - \overline{\beta_\policy} G_{\piparas}$
		\STATE {\textbf{if} gradient steps \texttt{mod} $m_\lambda$ $=0$ \textbf{then} } $\lamparas \leftarrow \lamparas + \overline{\beta_\lambda} G_{\lamparas}$
		\STATE {\textbf{if} gradient steps \texttt{mod} $m_\phi$ $=0$ \textbf{then} }  $\zeta \leftarrow \zeta - \overline{\beta_\zeta} G_{\zeta}$
		\ENSURE $\qparas_1$, $\qparas_2$, $\piparas$,  $\xi$.
	\end{algorithmic}
\end{algorithm}
\end{small}

\subsection{Convergence Analysis}
The convergence proof of a three timescale adversarial training of \eqref{eq:minmaxmin} mainly follows the multi-timescale convergence according to Theorem 2 in Chapter 6 in \citet{borkar2009stochastic} about multiple timescale convergence of multi-variable optimization. Some studies also adopted this procedure to explain the convergence of RL algorithms from the perspective of stochastic optimization \citep{bhatnagar2009natural,bhatnagar2012online}, especially those with Lagrangian-based methods \citep{chow2017risk}. We incorporate the recent study on \emph{clipped stochastic gradient descent} to further improve the generalization of this convergence proof \citep{zhang2019gradient}. We first give some assumptions:
\begin{assumption}[learning rates]
	The learning rate schedules, $\{\beta_\theta(k), \beta_\xi(k), \beta_\zeta(k)\}$, satisfy
	    \begin{equation}
	    \notag
		\begin{gathered}
			{\small \sum\nolimits_{k} \beta_\theta(k)=\sum\nolimits_{k} \beta_\xi(k)=\sum\nolimits_{k} \beta_\zeta(k)=\infty.} \\
			{\small \sum\nolimits_{k} \beta_\theta(k)^{2}, \quad \sum\nolimits_{k} \beta_\xi(k)^{2}, \quad \sum\nolimits_{k} \beta_\zeta(k)^{2}<\infty, \quad
			\beta_\xi(k)=o\left(\beta_\theta(k)\right),\ \beta_\zeta(k)=o\left(\beta_\xi(k)\right).}
		\end{gathered}
	\end{equation}
\end{assumption}
\noindent This assumption also implies that the policy converges in the fastest timescale, then the multipliers, and finally the safety index parameters. 
\begin{proposition}[Clipped gradient descent]
	The actual learning rate used in Algorithm 1 is 
	\begin{equation}
		\overline{\beta_{\#}}:=\min \left\{\beta_{\#}, { \beta_{\#}}/{\|G_{\#}\|}\right\}
	\end{equation}
	where $\#\in\{w_1,w_2,\theta,\xi,\zeta\}$ is the parameters, and $G_{\#}$ is the corresponding gradients.
\end{proposition}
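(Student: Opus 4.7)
The plan is to derive the stated formula by unfolding the clipped stochastic gradient descent template of \citet{zhang2019gradient} into the exact parameter update performed in Algorithm \ref{alg:facspi} and then reading off the effective learning rate. The proposition does not require any convergence machinery: it is an identity between two equivalent ways of writing a single clipped SGD step, one that shrinks the gradient and one that shrinks the step size.

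First I would recall that clipped SGD (with unit clipping constant) replaces the raw stochastic gradient $G_\#$ by its normalized version $\tilde{G}_\# = G_\# \cdot \min\{1, 1/\|G_\#\|\}$ and applies the update $\# \leftarrow \# \mp \beta_\# \tilde{G}_\#$, with sign $-$ for the descent blocks $\{\theta,\zeta,w_1,w_2\}$ and $+$ for the ascent block $\xi$. Algorithm \ref{alg:facspi} instead absorbs the clipping into the step size itself, so the claim reduces to the purely algebraic identity
$$\beta_\# \cdot \min\!\left\{1,\, \tfrac{1}{\|G_\#\|}\right\} \;=\; \min\!\left\{\beta_\#,\, \tfrac{\beta_\#}{\|G_\#\|}\right\}, \qquad \beta_\# > 0.$$
This holds because multiplication by the positive scalar $\beta_\#$ is order-preserving and therefore commutes with the $\min$.

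Next I would substitute this identity back into the clipped update to obtain $\# \leftarrow \# \mp \overline{\beta_\#}\, G_\#$ with $\overline{\beta_\#}:=\min\{\beta_\#,\beta_\#/\|G_\#\|\}$, exactly matching the form written in lines 1--3 of Algorithm \ref{alg:facspi}. The same derivation is then applied verbatim for each block $\# \in \{w_1,w_2,\theta,\xi,\zeta\}$ listed in the proposition, with the critic updates for $w_1,w_2$ handled identically through the omitted soft $Q$-learning step inherited from \citet{haarnoja2018soft}.

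The only real obstacle is bookkeeping. One must fix the convention $\overline{\beta_\#} := \beta_\#$ in the degenerate case $\|G_\#\| = 0$, which is consistent under the extended-real interpretation $\beta_\#/\|G_\#\| = +\infty$ (so the $\min$ selects $\beta_\#$, and the update is vacuously a no-op since $G_\# = 0$). One must also make explicit that $\|\cdot\|$ denotes the Euclidean norm of the entire stochastic gradient vector associated with block $\#$, so that a single scalar factor rescales the whole block rather than clipping coordinate-by-coordinate. With these conventions pinned down, the proposition follows from the one-line algebra above and exists solely to define the effective step size that the subsequent multi-timescale convergence argument will reference.
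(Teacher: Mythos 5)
Your reading is correct: the paper offers no proof of this Proposition at all, because it functions as a definition of the clipping rule used in Algorithm~1 rather than as a claim requiring argument, and your one-line identity $\beta_\#\min\{1,1/\|G_\#\|\}=\min\{\beta_\#,\beta_\#/\|G_\#\|\}$ (valid since multiplying by a positive scalar commutes with $\min$) is exactly the trivial content that makes the gradient-normalizing and step-size-shrinking formulations of clipped SGD coincide. The additional conventions you pin down --- the degenerate case $\|G_\#\|=0$ and the blockwise rather than coordinatewise norm --- are sensible and consistent with how the paper later invokes $\overline{\beta_\#}$ in Lemma~\ref{lemma:clippedsgd} and the multi-timescale convergence analysis.
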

\begin{assumption}
	The state and action are sampled from compact sets, and all neural networks are $L_0-L_1$ smooth.
	\label{assump:compact}
\end{assumption}
As we are to finish a safe control problem that the agent should be confined in safe sets, and the actuator has physical limits, the bounded assumption is reasonable. We use multi-layer perceptron with continuous differentiable activation functions in practical implementations (details can be found in Appendix \ref{sec:para}).
\begin{theorem}
	Under all the aforementioned assumptions, the sequence of policy, multiplier, and safety index parameters tuple $(\theta_k, \xi_k, \zeta_k)$ converge almost surely to a locally optimal safety index parameters and its corresponding  locally optimal policy and multiplier $(\theta^*, \xi^*, \zeta^*)$ as $k$ goes to infinity.
	\label{theorem:major}
\end{theorem}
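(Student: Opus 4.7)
The plan is to cast Algorithm~\ref{alg:facspi} as three coupled stochastic-approximation recursions running on three separated timescales, and invoke the multi-timescale convergence theorem of \citet{borkar2009stochastic} (Chapter 6, Theorem 2), augmented by the clipped-SGD analysis of \citet{zhang2019gradient} to accommodate the $L_0$--$L_1$ smoothness of the neural-network parameterizations. First I would rewrite the three updates as
\[
    \theta_{k+1}=\theta_k-\overline{\beta_\theta}(k)\bigl(G_\theta(\theta_k,\xi_k,\zeta_k)+M^{(1)}_{k+1}\bigr),
\]
and analogously $\xi_{k+1}$ (ascent) and $\zeta_{k+1}$ (descent), where each $M^{(i)}_{k+1}$ is a martingale-difference noise term with respect to the natural filtration generated by the replay buffer $\mathcal{D}$. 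The learning-rate assumption gives both Robbins--Monro summability and the timescale separation $\beta_\xi=o(\beta_\theta)$, $\beta_\zeta=o(\beta_\xi)$, so from the viewpoint of the fastest recursion the pair $(\xi_k,\zeta_k)$ is quasi-static, and from the viewpoint of the middle recursion $\zeta_k$ is quasi-static while $\theta_k$ has effectively equilibrated.

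Next I would verify the hypotheses required to associate each recursion with a limiting ODE. Assumption~\ref{assump:compact} (compact state/action domains, $L_0$--$L_1$-smooth networks) combined with the clipping rule $\overline{\beta_\#}=\min\{\beta_\#,\beta_\#/\|G_\#\|\}$ yields uniformly bounded update magnitudes; applying the clipped-SGD convergence result of \citet{zhang2019gradient} on each timescale removes the usual requirement of globally Lipschitz gradients. Boundedness of the iterates then follows from compactness and clipping, so that the limiting ODEs
\[
    \dot\theta=-\nabla_\theta \mathcal{L}'(\theta,\xi,\zeta),\quad \dot\xi=\nabla_\xi \mathcal{L}'(\theta,\xi,\zeta),\quad \dot\zeta=-\nabla_\zeta \mathcal{L}'(\theta,\xi,\zeta)
\]
are well-defined and admit locally asymptotically stable equilibria when the slower variables are frozen.

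Applying Borkar's multi-timescale theorem sequentially, the fastest iterate $\theta_k$ tracks a local minimizer $\theta^*(\xi,\zeta)$ of $\mathcal{L}'(\cdot,\xi,\zeta)$; with $\theta_k\to\theta^*(\xi_k,\zeta_k)$, the middle iterate then tracks $\dot\xi=\nabla_\xi \mathcal{L}'(\theta^*(\xi,\zeta),\xi,\zeta)$ and converges to a local maximizer $\xi^*(\zeta)$; finally, Theorem~\ref{theorem:loss} identifies the slowest ODE with the gradient flow of $J(\phi)$, so $\zeta_k$ converges to a local minimizer $\zeta^*$ of the safety-index loss~\eqref{eq:philoss1}. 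Concatenating the three conclusions gives the claimed almost-sure convergence of $(\theta_k,\xi_k,\zeta_k)$ to a locally optimal tuple $(\theta^*,\xi^*,\zeta^*)$.

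The main obstacle is the adversarial (saddle-point) structure at the middle timescale: Borkar-style results generically deliver convergence to ODE equilibria, not to saddle points of the original min--max problem. To handle this I would exploit the fact that $\lambda$ enters $\mathcal{L}'$ \emph{linearly} in \eqref{eq:lagwithphi} and is clipped into the compact interval $[0,\lambda_{\max}]$ from Lemma~\ref{lemma:propto}, so the middle ODE is ascent on a locally concave (in fact affine) function of $\xi$ with a compact feasible region, ensuring a well-defined attractor. A second delicate point is the envelope-theorem step of Theorem~\ref{theorem:loss} when $\theta^*$ and $\xi^*$ are only \emph{local} optima and potentially multi-valued in $\zeta$; I would address it by restricting to a neighborhood of $(\theta^*,\xi^*,\zeta^*)$ on which a non-degenerate Hessian makes the local optimizer a single-valued continuously differentiable function of $\zeta$, so that the slow-manifold ODE is well-posed and \eqref{eq:equal} applies pointwise in that neighborhood.
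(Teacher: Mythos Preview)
Your overall plan coincides with the paper's: cast Algorithm~\ref{alg:facspi} as a three-timescale stochastic approximation, use the clipped-SGD result of \citet{zhang2019gradient} together with Assumption~\ref{assump:compact} to bound the martingale noise and guarantee iterate boundedness, then apply \citet{borkar2009stochastic} sequentially with a Lyapunov argument at each level. The paper carries this out in exactly that order, explicitly computing the error bounds $\|\delta\theta_k\|^2,\|\delta\xi_k\|^2,\|\delta\zeta_k\|^2$ and writing down the Lyapunov functions $L_{\xi,\zeta}(\theta),\ L_\zeta(\xi),\ L(\zeta)$.

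There is, however, a gap in your treatment of the middle timescale. You write that ``$\lambda$ enters $\mathcal{L}'$ linearly \ldots\ so the middle ODE is ascent on a locally concave (in fact affine) function of $\xi$.'' This does not hold: the Lagrange function \eqref{eq:lagwithphi} is affine in the \emph{output} $\lambda(s)$, but the optimization variable is the network parameter $\xi$, and $\xi\mapsto\lambda_\xi(s)$ is a generic multi-layer perceptron, so $\xi\mapsto\mathcal{L}'(\theta^*(\xi,\zeta),\xi,\zeta)$ is neither affine nor locally concave in general. Consequently the ``well-defined attractor from affinity + compactness'' argument does not establish that the limit $(\theta^*,\xi^*)$ is a saddle point rather than an arbitrary stationary point of the coupled ODE.

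The paper closes this gap differently: after showing $\xi_k\to\xi^*(\zeta)$ via Lyapunov descent, it verifies the saddle-point inequalities directly from \emph{stationarity plus complementary slackness}. Concretely, if some $s_t$ had $Q_\phi(s_t,a_t)>0$ at $(\theta^*,\xi^*)$ then $\nabla_\xi\mathcal{L}'(\theta^*,\xi^*,\zeta)=d_\gamma^{\pi_{\theta^*}}(s_t)Q_\phi(s_t,a_t)\nabla_\xi\lambda_\xi(s_t)\neq 0$, contradicting stationarity; a symmetric perturbation argument shows $\lambda_{\xi^*}(s_t)=0$ whenever $Q_\phi(s_t,a_t)<0$. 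These two facts give $\mathcal{L}'(\theta^*,\xi^*,\zeta)\geq\mathcal{L}'(\theta^*,\xi,\zeta)$ and hence the local saddle property. Replacing your linearity argument with this KKT-style reasoning would repair the proposal; the rest of your outline, including the use of Theorem~\ref{theorem:loss} to interpret the slowest ODE, matches the paper's proof.
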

\begin{proof}
    See Appendix \ref{appendix:proof1}. 

\end{proof}
\section{Experiments}
In our experiments, we focus on the following questions:
\begin{enumerate}
	\item How does the proposed algorithm compare with other constraint RL algorithms? Can it achieve a safe policy with zero constraint violation?
	\item How does the learning-based safety index synthesis outperform the handcrafted safety index or the original safety index in the safety performance?
	\item Does the synthesized safety index allow safe control in all states the agent experienced?
\end{enumerate}
To demonstrate the effectiveness of the proposed online synthesis rules, we select the safe reinforcement learning benchmark environments Safety Gym \citep{ray2019benchmarking} with different tasks and obstacles. We name a specific environment by \texttt{\{Obstacle type\}-\{Obstacle size\}-\{Task\}}. We select six environments with different tasks and constraint objectives, where four of them are demonstrated in Figure \ref{fig:safetygym}, and others are provided in \appendixref{sec:addexpgym}. 
\begin{remark}
Some previous studies use controllers specially designed for goal-reaching tasks \citep{jin2020neural,qin2021learning}, while our algorithm can handle arbitrary complex tasks like the \texttt{Push} tasks. 
\end{remark}
\begin{figure}[b]

	\centering
	\subfigure[{\small Hazards: non-physical dangerous areas.}]{\includegraphics[width=0.22\linewidth]{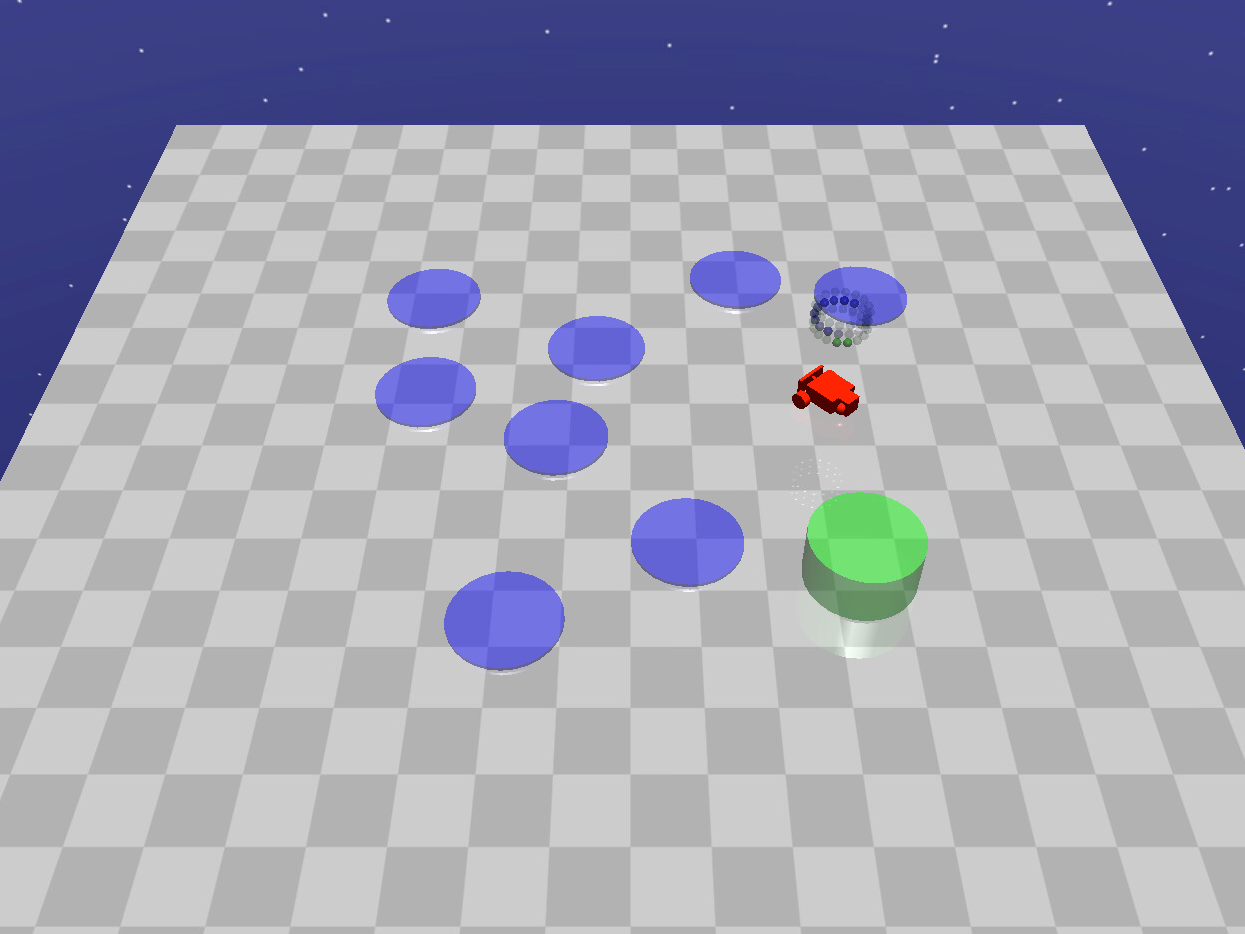}}
	\subfigure[{\small Pillars: Static physical obstacles}]{\includegraphics[width=0.22\linewidth]{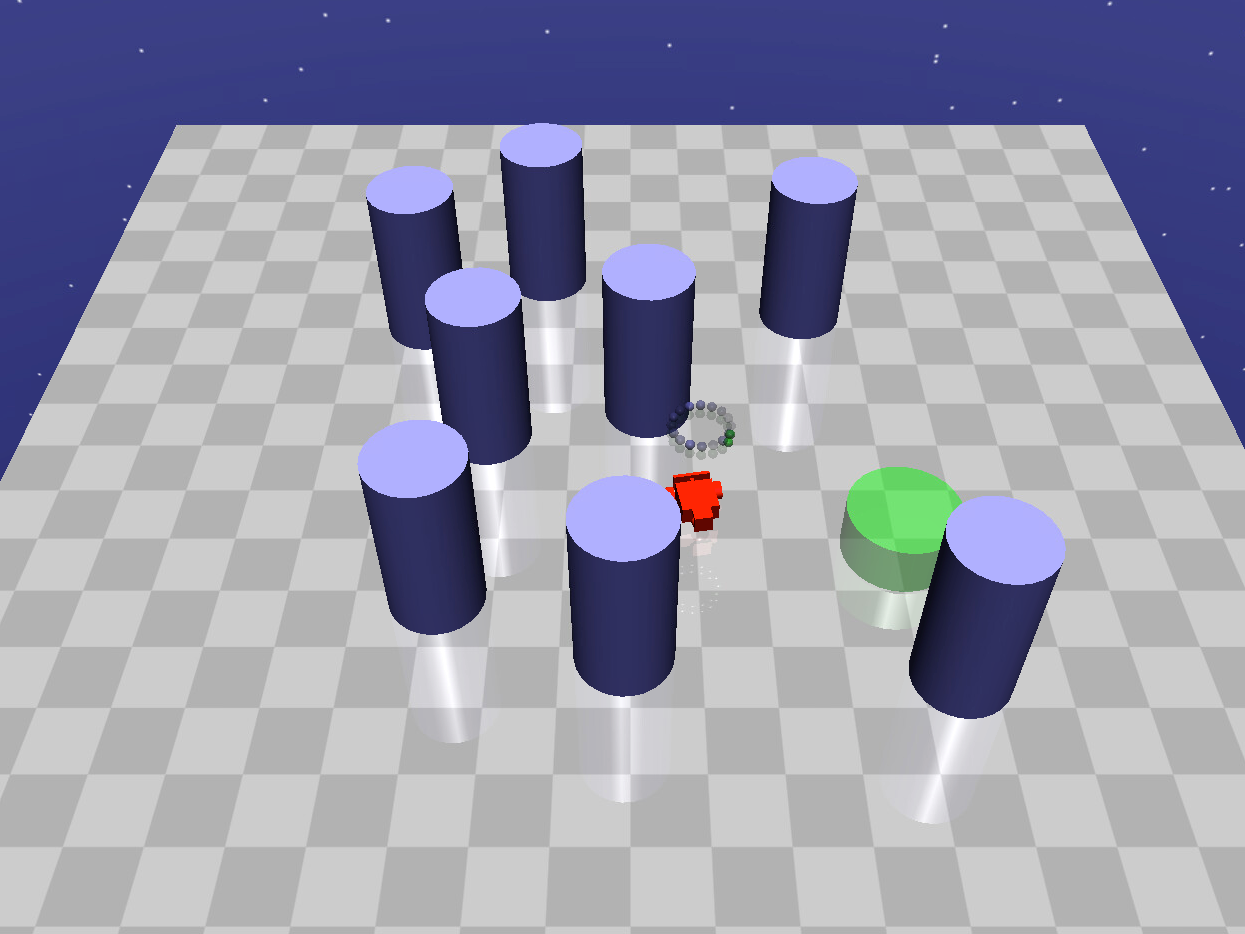}}
	\subfigure[{\small Goal task: navigating the robot to reach the green cylinder.}]{\includegraphics[width=0.22\linewidth]{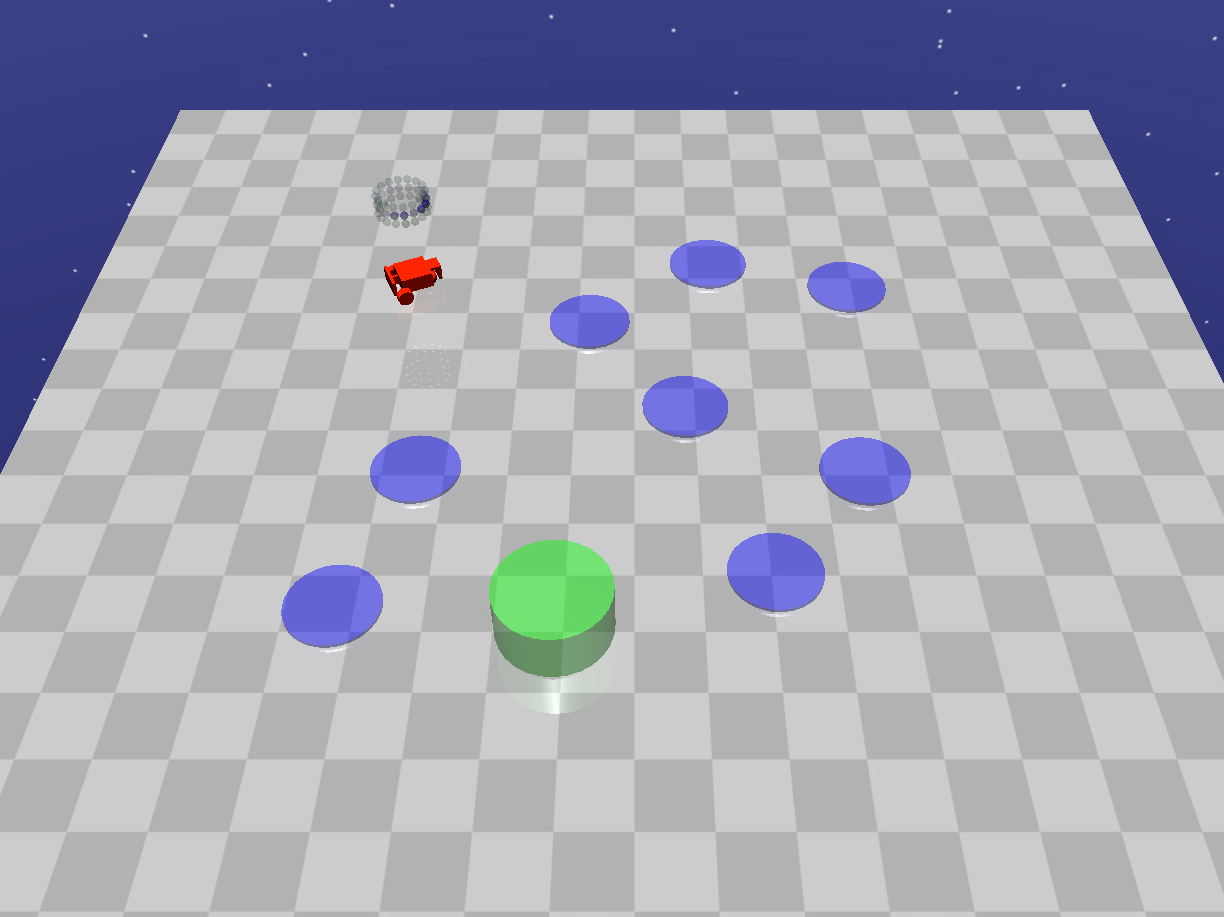}}
	\subfigure[{\small Push task: pushing the yellow box inside the green cylinder.}]{\includegraphics[width=0.22\linewidth]{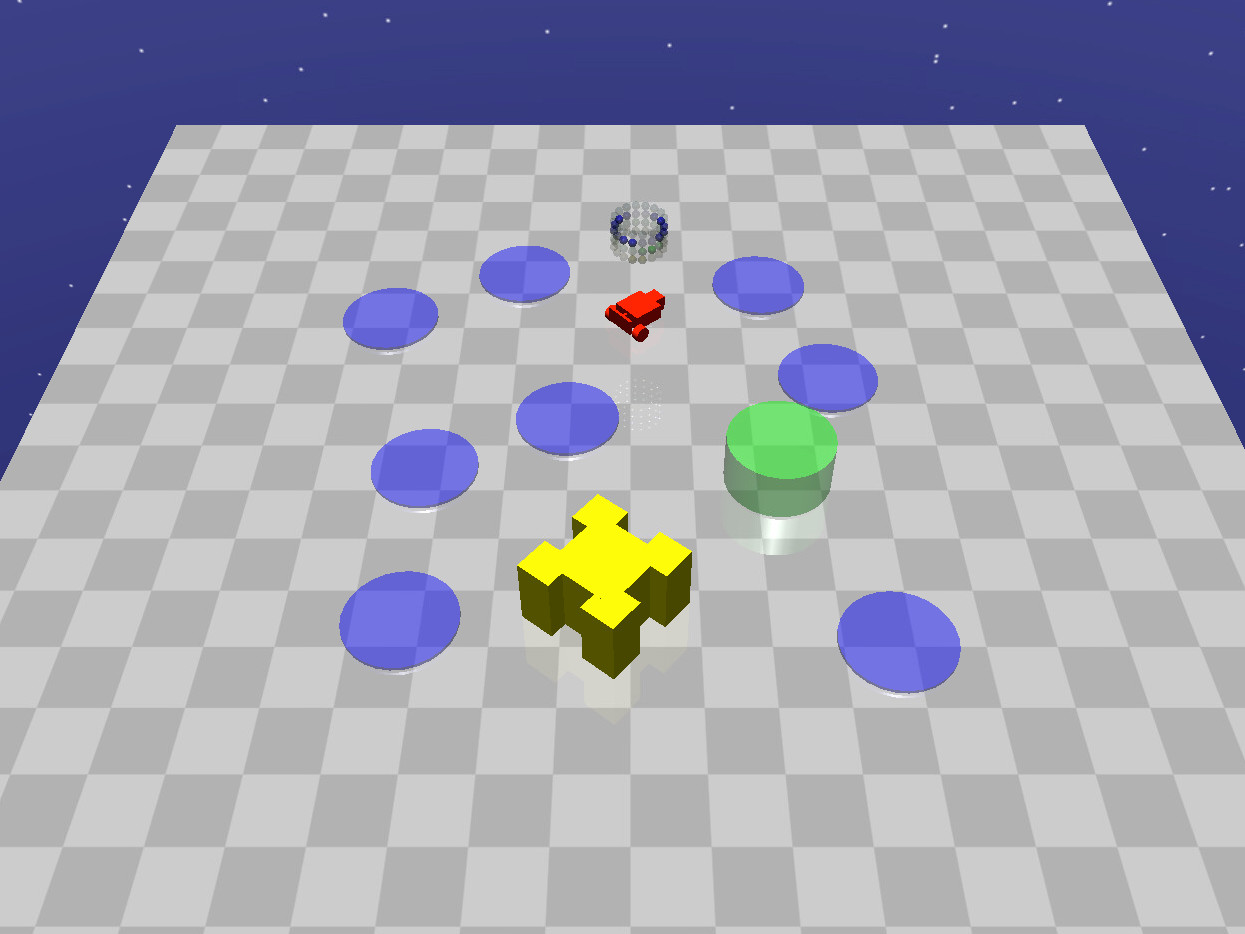}}
	\caption{Obstacles and tasks in Safety Gym.}
	\label{fig:safetygym}
\end{figure}
In this section, we use a fine-tuned form of safety index in \citet{anonymous2021modelfree}
\begin{equation}
	\phi(s)=\sigma+d_{\min }^{n}-d^{n}-k \dot{d}
	\label{eq:sis}
\end{equation}
where $d$ is the distance between the agent and obstacle, $d_{\min}$ is the minimum safe distance, and $\dot d$ is the derivative of distance with respect to time, $\xi=[\sigma, n, k]$ are the tunable parameters we desire to optimize in the online synthesis algorithm. The observations in Safety Gym include Lidar, speedometer, and magnetometer, which can be used to compute $d$ and $\dot d$ from observations. 
We compare the proposed algorithm against two types of baseline algorithms: 
\begin{itemize}
	\item CRL baselines, including TRPO-Lagrangian, PPO-Lagrangian and CPO \cite{achiam2017constrained,ray2019benchmarking}. The cost threshold is set at zero to learn solid safe policies.
	\item FAC with original safety index $\phi_0$ and handcrafted safety index $\phi_h$, where $\phi_0=d_{min}-d$ and $\phi_h=0.3 + d_{min}^2 - d^2 -k\dot d$, named as \emph{FAC with $\phi_0$} and \emph{FAC with $\phi_h$}.  The choice of $\phi_h$ is based on empirical knowledge. Details about baseline algorithms can be found in Appendix \ref{sec:implementationdetail}.
\end{itemize}
\subsection{Evaluating FAC-SIS and Comparison Analysis}
\begin{figure}[t]
    \centering
	\includegraphics[width=0.245\linewidth]{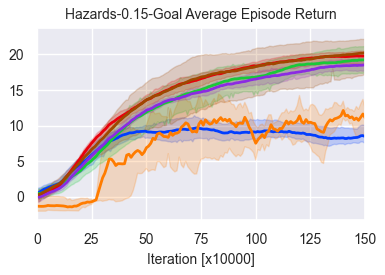}
	\includegraphics[width=0.245\linewidth]{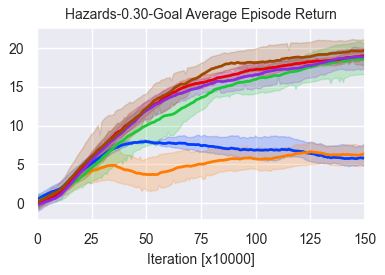}
	\includegraphics[width=0.245\linewidth]{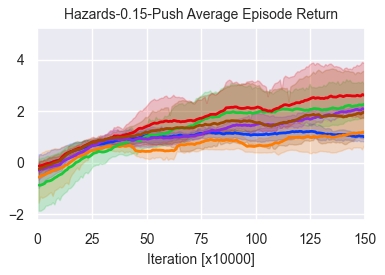}
	\includegraphics[width=0.245\linewidth]{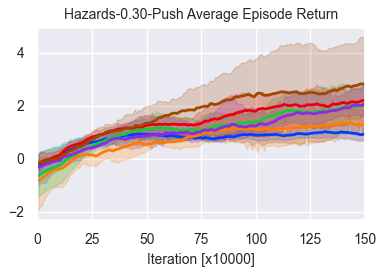}\\
	\includegraphics[width=0.245\linewidth]{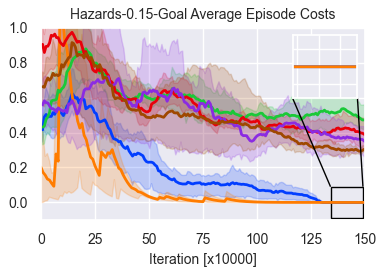}
	\includegraphics[width=0.245\linewidth]{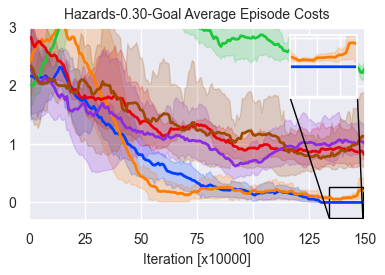}
	\includegraphics[width=0.245\linewidth]{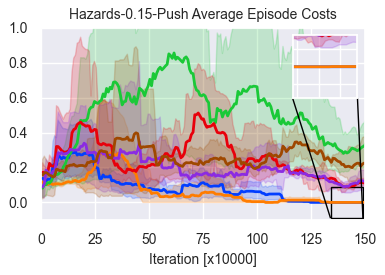}
	\includegraphics[width=0.245\linewidth]{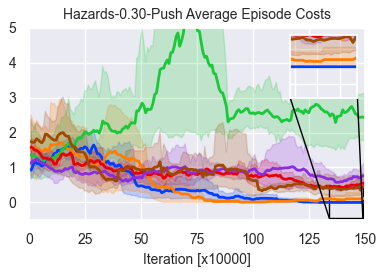}\\
	\includegraphics[width=0.245\linewidth]{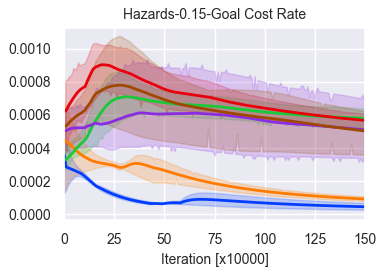}
	\includegraphics[width=0.245\linewidth]{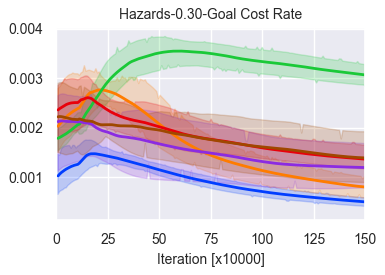}
	\includegraphics[width=0.245\linewidth]{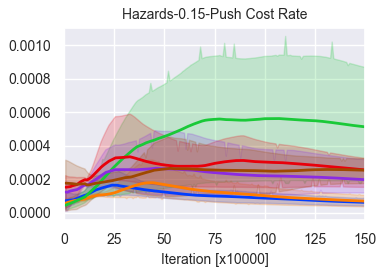}
	\includegraphics[width=0.245\linewidth]{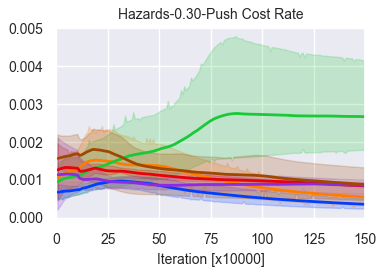}\\
	\includegraphics[width=0.6\linewidth]{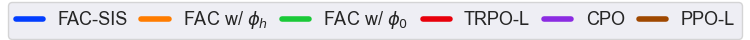}
	\caption{Average performance of FAC-SIS and baseline methods on 4 different Safety Gym environments over five seeds.}
	\label{fig:major}
\end{figure}
Results of the performance comparison are shown in Figure \ref{fig:major}. The results suggest that FAC with $\phi_0$ performs poorest in the safety performance, which indicates that there are indeed many inevitably unsafe states, and SIS is necessary for these tasks. Only FAC-SIS learns the policy with \emph{zero violation} and takes the lowest cost rate in all environments, answering the first question of zero constraint violation. FAC with $\phi_h$ fails to learn a solid safe policy in those environments with \texttt{0.30} constraint size (See the zoomed window in the second row), indicating that the handcrafted safety index can not cover all the environments.
As for the baseline CRL algorithms, they can not learn a zero-violation policy in any environment because of the posterior penalty in the trial-and-error mechanism stated above. As for the reward performance, FAC-SIS has comparable reward performance in the \texttt{Push} task. For the \texttt{Goal} task, FAC with $\phi_h$ and FAC-SIS sacrifice the reward performance to guarantee safety, explained in \citet{ray2019benchmarking}.

\subsection{Validity Verification of Synthesized Safety Index}
\begin{figure}[t]
	\centering
	\includegraphics[width=0.24\linewidth]{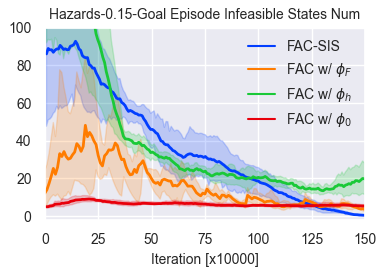}
	\includegraphics[width=0.24\linewidth]{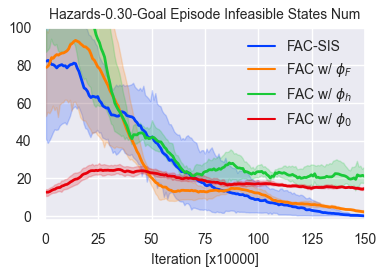}
	\includegraphics[width=0.24\linewidth]{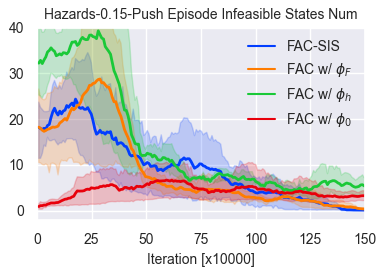}
	\includegraphics[width=0.24\linewidth]{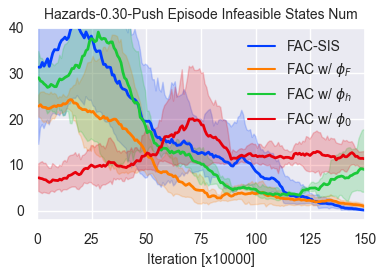}
	\caption{\small Average episodic number of violations of safe action constraint \eqref{eq:cstr0}. A valid safety index and its corresponding safe control policy should have zero violation performance.}
	\label{fig:phiincrease}
\end{figure}
We conduct new metrics and experiments to show the validity of our synthesized safety index. Recall that the validity means that there always exists a feasible control policy to satisfy the constraint in \eqref{eq:cstr0}. To effectively demonstrate the feasibility of SIS, we add a new baseline, \emph{FAC with} $\phi_F$, where $\phi_F$ is a valid safety index verified by \citet{anonymous2021modelfree}. Figure \ref{fig:phiincrease} demonstrates the episodic number of constraint violations of \eqref{eq:cstr0} in the Safety Gym environments. The results show that FAC-SIS and \emph{FAC with }$\phi_F$ can reach a nearly stable zero violation, which means that FAC can satisfy safe action constraint for a given valid safety index, and FAC-SIS also synthesizes a valid safety index. However, with $\phi_0$ and $\phi_h$ there are consistent violations even with the converged policy, caused by different reasons. For $\phi_h$, the reason is simply the inability to make the energy dissipate. For $\phi_0$, no high-order derivative in the safety index, so $\phi_0$ cannot handle the high relative-degree between the constraint function and control input. 

Furthermore, we want to give some scalable analyses about SIS. Firstly, we want to visualize that we can always find actions to dissipate the energy in the sampled distributions after SIS. 
We conduct experiments on a simple collision-avoidance environment shown in \figureref{fig:statedist}\footnote{See \appendixref{sec:statedist} for more details.}.
We select three different initialization rules of the agent and hazard. We use a sample-based exhaustive method to identify infeasible states and see if they overlap the sampled state distributions. We project the state distribution to the 2D space of $d$ and $\dot d$, and the results are listed in Figure \ref{fig:statedist}. As we expect all the sampled states to be feasible, these two distributions \emph{should not overlap}. The results show that the overlap of these two distributions is very small, which indicates that nearly all the sampling states are feasible.

Secondly, we want to visualize the shape of the safe set concerning the learned safety index. We slice the state space into three 2D planes with different agent heading angles, as shown in \figureref{fig:shape}. We use Hamilton-Jacobi reachability analysis to compute the avoidable sets numerically. The avoidable set considers the safe set under the \emph{most conservative} control inputs, which is the maximum controlled invariant set \citep{mitchell2007toolbox,choi2021robust}. We use $\phi_h$ as the initial safety index of 
SIS. \figureref{fig:shape} demonstrates that inevitable unsafe states exist in the zero-sublevel set of the empirical safety index. It also shows that we successfully exclude the inevitably unsafe sets through SIS. Notably, the zero-sublevel sets of the synthesized safety index are the subsets of the HJ-avoidable sets. The reasons why SIS can not learn the perfect shapes include limits of the representation capabilities of the safety index parameterization. Additionally, we still consider the \emph{optimal criteria}, resulting in less conservative policies and possibly smaller safe sets.
\begin{figure}
    \centering
    \includegraphics[width=0.2\linewidth]{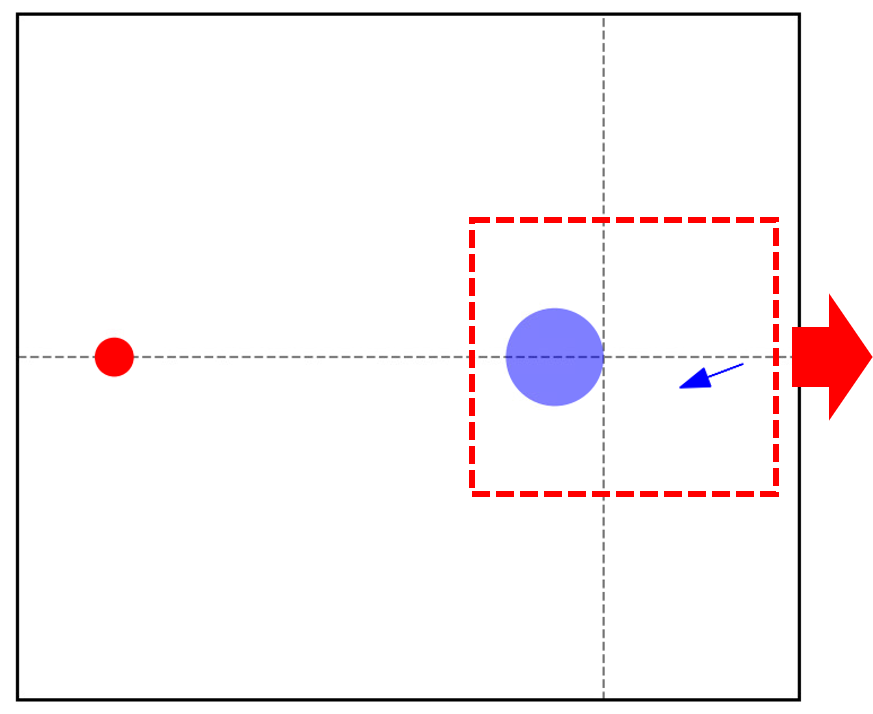}\quad
    \includegraphics[width=0.72\linewidth]{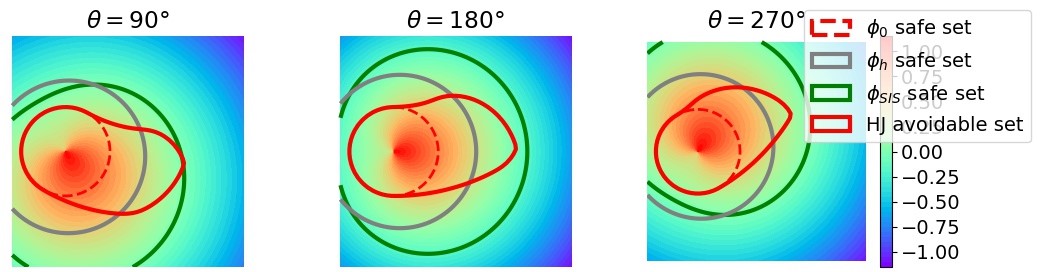}
    \caption{\small The color indicates the value of the synthesized safety index. The zero-sublevel set of learned safety index is a subset of the numerically max forward invariant set, the HJ avoidable set. We exclude all inevitably unsafe states through SIS like \figureref{fig:intro}. HJ-based solution is only capable for goal-reaching tasks with the reach-avoid problem formulation, while proposed algorithm can handle arbitrary task.}
    \label{fig:shape}
\end{figure}
\section{Conclusion}
This paper focuses on the joint synthesis of the safety certificate and the safe control policy for unknown dynamical systems and general tasks using a CRL approach. We are the first study to start with unknown dynamics and imperfect safety certificates, which significantly improves the applicability of the energy-function-based safe control. We add the optimization of safety index parameters as an outer loop of Lagrangian-based CRL with a unified loss. The convergence is analyzed theoretically. Experimental results demonstrate that the proposed FAC-SIS synthesizes a valid safe index while learning a safe control policy.
In future work, we will consider more complex safety index parameterization rules, for example, neural networks. Meanwhile, we will consider other factors in SIS, such as the reward performance of the safe control policies.
\acks{}
This work was done during Haitong Ma's internship at Carnegie Mellon University. This study is supported by National Key R\&D Program of China with 2020YFB1600602 and Tsinghua University-Didi Joint Research Center for Future Mobility. This study is also supported by Tsinghua-Toyota Joint Research Fund. The authors would like to thank Mr. Weiye Zhao and Mr. Tianhao Wei for their valuable suggestions on the experiments.
\bibliography{l4dc}
\newpage
\appendix

\section{Theoretical Results in Section \ref{sec:joint}}
\subsection{Proof of Lemma \ref{lemma:propto}}
\label{sec:proofpropto}
	\noindent Define set of states with safe actions as  $\Ss_f=\{s|\mathcal{U}_s(s)\neq\emptyset\}$. $\Delta=\mathbb{E}_s\{-v^{\pi^*}(s)\}$ since $-v^\pi(s)$ is irrelevant with $\phi$. For $s\notin\Ss_f$ (i.e., $\big(\phi(\state') - \max \{\phi(s)-\eta_D, 0\}\big)>0$), we know that $\lambda^*(s)\to \infty$ from Lemma \ref{prop:scsc}. Then $\lambda^*(s)$ is clipped to $\lambda_{\max}$. Therefore, the Lagrange function  $\eqref{eq:lagwithphi}$ can be reformulated to
\begin{equation}
	\begin{aligned}
		&\Ll'(\pi^*,\lambda^*,\phi) \\
		= &\left.\mathbb{E}_{s}\Big\{-v^\pi(s)+\lambda^*(s)\big(\phi(\state')- \max \{\phi(s)-\eta_D, 0\}\big)\Big\}\right|_{\pi=\pi^*}\\
		= &\left.\mathbb{E}_{s}\Big\{\lambda^*(s)\big(\phi(\state')- \max \{\phi(s)-\eta_D, 0\}\big)\Big\}\right|_{\pi=\pi^*} + \Delta\\
		= &\left.\mathbb{E}_{s\notin\Ss_f}\Big\{\lambda^*(s)\big(\phi(\state')- \max \{\phi(s)-\eta_D, 0\}\big)\Big\}\right|_{\pi=\pi^*} + \Delta\\
		= &\left.\lambda_{\max}\mathbb{E}_{s\notin\Ss_f}\Big\{\big(\phi(\state')- \max \{\phi(s)-\eta_D, 0\}\big)\Big\}\right|_{\pi=\pi^*} + \Delta\\
		= &\lambda_{\max}J(\phi) + \Delta
	\end{aligned}
	\label{eq:philoss2}
	\notag
\end{equation}
\section{Theoretical Results in Section \ref{sec:algo}}
\subsection{Gradients Computation}
\label{sec:grad}
The objective function of updating the policy and multipliers is the Lagrange function \eqref{eq:SL2}. Using the framework of maximum entropy RL, the objective function of policy update is:
 	\begin{equation}
 		J_{\pi}(\piparas)=\mathbb{E}_{\st \sim \mathcal{D}}\bigg\{\mathbb{E}_{\at \sim \pi_{\piparas}}\Big\{\alpha \log \big(\pi_{\piparas}\left(\at \mid \st\right)\big)-
 			Q_{\qparas}(\st, \at) + \lambda_\xi(\st) Q_\phi(\st,\at)\Big\}\bigg\}
 			\label{eq: losspi}
 			\notag
 	\end{equation}
 	The policy gradient with the reparameterized policy $\at=f_\piparas(\epsilon_t; \st)$ can be approximated by:
 	\begin{small}
 		\begin{equation*}
 			\begin{aligned}				G_\theta=\hat{\nabla}_{\piparas} J_{\pi}(\piparas)=&\nabla_{\piparas} \alpha \log \big(\pi_{\piparas}\left(\at \mid \st\right)\big)+\Big(\nabla_{\at} \alpha \log \left(\pi_{\piparas}\left(\at \mid \st\right)\right)\\
 				&-\nabla_{\at} \big(Q_\qparas\left(\st, \at\right) - \lambda_\xi(\st) Q_\phi\left(\st, \at\right)\big)\Big) \nabla_{\piparas} f_{\piparas}\left(\epsilon_{t} ; \st\right)
 			\end{aligned}
 		\end{equation*}
 	\end{small}
    where $\hat{\nabla}_{\piparas} J_{\pi}(\piparas)$ represents the stochastic gradient with respect to $\theta$, and $Q_\phi(s_t, a_t) = \big(\phi(\stp) - \max \{\phi(\st)-\eta_D, 0\}\big)$. Neglecting those irrelevant parts, the objective function of updating the multiplier network parameters $\xi$ is 
 	\begin{equation*}
 		J_{\lambda}(\xi) = \mathbb{E}_{\st \sim \mathcal{D}}\bigg\{\mathbb{E}_{\at \sim \pi_{\piparas}}\Big\{\lambda_\xi(\st)\big( Q_\phi(\st,\at)-d\big)\Big\}\bigg\}
 	\end{equation*}
 	The stochastic gradient is
 	\begin{equation}
 		G_\xi=\hat{\nabla}J_\lambda(\lamparas) = Q_\phi(\st,\at) \nabla_\lamparas\lambda_\lamparas(\st)
 		\label{eq:lamsubgrad}
 	\end{equation}
	The objective function of updating the safety index parameters $\zeta$ is already discussed, so the gradients for $\zeta$ is
 	\begin{equation}
 		G_{\zeta} = \lambda_{\xi}(\st)\nabla_\zeta \Delta\phi^{\pi_{\theta}}(\st)
 	\end{equation}
 	where $ \Delta\phi^{\pi_\theta}(\st)=\big(\phi(\stp) - \max \{\phi(\st)-\eta_D, 0\}\big)|_{\pi_\theta}$, we use a different notation from $Q_\phi(\st,\at)$ since we focus on different variables.
\subsection{Proof of Theorem \ref{theorem:major}}
\label{appendix:proof1}
Recall the overview of the total convergence proof:
\begin{enumerate}
	\item First we show that each update of the multi-time scale discrete stochastic approximation algorithm $(\theta_k, \xi_k,\zeta_k)$ converges almost surely, but at different speeds, to the \emph{stationary point} $(\theta^*, \xi^*, \zeta^*)$ of the corresponding continuous-time system.
	\item By Lyapunov analysis, we show that the continuous time system is locally asymptotically stable at $(\theta^*, \xi^*, \zeta^*)$.
	\item We prove that $(\theta^*, \xi^*)$ is the locally optimal solution, or a local saddle point for the CRL problems with local optimal safety index parameters $\zeta^*$.
\end{enumerate}
First, we introduce the important lemma used for the convergence proof:
\begin{lemma}[Soft policy evaluation, \citet{haarnoja2018softb}]
    The Q-function update will converge to the soft Q-function as the iteration number goes to infinity. 
\end{lemma}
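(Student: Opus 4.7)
The plan is to follow the standard Banach fixed-point argument that Haarnoja et al. use for the soft Bellman backup. First I would introduce the soft Bellman operator $\mathcal{T}^\pi$ acting on bounded Q-functions by
\[
(\mathcal{T}^\pi Q)(s,a) = r(s,a) + \gamma \mathbb{E}_{s' \sim \mathcal{F}(\cdot\mid s,a)}\big[V(s')\big], \quad V(s) = \mathbb{E}_{a \sim \pi}\big[Q(s,a) - \alpha \log \pi(a\mid s)\big].
\]
Under Assumption \ref{assump:compact} (compact state--action space and smooth networks) the reward is bounded and the policy entropy is finite, so $\mathcal{T}^\pi$ maps the Banach space $B(\mathcal{S}\times\mathcal{A})$ of bounded measurable functions, equipped with the sup-norm, into itself. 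The soft $Q$-function $Q^\pi$ is defined to be the fixed point of $\mathcal{T}^\pi$, so the lemma reduces to showing that iterating $\mathcal{T}^\pi$ converges to that fixed point.

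The core step is to show that $\mathcal{T}^\pi$ is a $\gamma$-contraction in the sup-norm. The entropy term $-\alpha \mathbb{E}_{a \sim \pi}[\log \pi(a\mid s)]$ depends only on $\pi$ and not on $Q$, so it cancels in the difference $(\mathcal{T}^\pi Q_1 - \mathcal{T}^\pi Q_2)(s,a)$, leaving
\[
\big|(\mathcal{T}^\pi Q_1 - \mathcal{T}^\pi Q_2)(s,a)\big| \leq \gamma \, \mathbb{E}_{s'}\,\mathbb{E}_{a' \sim \pi}\big|Q_1(s',a') - Q_2(s',a')\big| \leq \gamma \|Q_1 - Q_2\|_\infty.
\]
Taking the supremum over $(s,a)$ gives $\|\mathcal{T}^\pi Q_1 - \mathcal{T}^\pi Q_2\|_\infty \leq \gamma \|Q_1 - Q_2\|_\infty$, which is a strict contraction since $\gamma \in [0,1)$. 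By the Banach fixed-point theorem, the iteration $Q_{k+1} = \mathcal{T}^\pi Q_k$ converges in sup-norm to the unique fixed point $Q^\pi$ from any initial bounded $Q_0$.

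For the stochastic-approximation implementation actually executed in Algorithm \ref{alg:facspi}, the tabular backup is replaced by a parameterized regression against sampled transitions; almost-sure convergence of this sampled version follows from standard Robbins--Monro arguments using the learning-rate conditions in Assumption 1 together with the bounded-variance property guaranteed by Assumption \ref{assump:compact}. The main subtle point is ensuring the entropy term $\log \pi(a\mid s)$ remains uniformly bounded so that $\mathcal{T}^\pi$ is well-defined on $B(\mathcal{S}\times\mathcal{A})$; for the SAC Gaussian policy used here this holds because the log-standard-deviation is clipped and the action set is compact, which also justifies moving from the contraction argument in the exact setting to the sampled setting without losing the fixed-point characterization.
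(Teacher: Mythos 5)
Your proof is correct and is essentially the same argument as the cited source: the paper itself gives no proof, deferring to \citet{haarnoja2018softb}, whose soft policy evaluation lemma rests on exactly the contraction-mapping argument you spell out (Haarnoja et al.\ fold the entropy term into an augmented reward and invoke standard policy evaluation, which is the same $\gamma$-contraction in sup-norm you make explicit). Your final paragraph on the sampled, parameterized setting goes beyond what the tabular lemma claims; note that the paper deliberately sidesteps that issue by treating the Q-function approximation error as a bounded bias and assuming the Q-function is fully updated.
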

\begin{remark}
 In stochastic programming, the error in policy improvement caused by Q-function is a fixed bias rather than random variables, resulting in that it will not affect the convergence as long as the error is bounded. Therefore, we assume that Q-function is fully updated here for simplicity. Otherwise, the proof will be wordy.
\end{remark}
\begin{lemma}[Convergence of clipped SGD, \cite{zhang2019gradient}]
	For a stochastic gradient descent problem of a continuous differentiable and $L_0-L_1$ smooth (which means $\left\|\nabla^{2} f(x)\right\| \leq L_{0}+L_{1}\|\nabla f(x)\|$) loss function $f(x)$ and its stochastic gradient $\hat\nabla f(x)$, If these conditions are satisfied 
	\begin{enumerate}
		\item $f(x)$ is lower bounded;
		\item There exists $\tau>0$, such that $\|\nabla \hat{f}(x)-\nabla f(x)\| \leq \tau$ almost surely;
	\end{enumerate} then the update of stochastic gradient descent converges almost surely with finite iteration complexity.
	\label{lemma:clippedsgd}
\end{lemma}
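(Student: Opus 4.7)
The plan is to adapt the standard stochastic descent lemma to the relaxed $L_0$--$L_1$ smoothness regime, where the crucial difference from the classical Lipschitz-smooth analysis is that the effective Hessian bound $L_0+L_1\|\nabla f(x)\|$ itself grows with the gradient magnitude. First I would integrate this Hessian bound along a short segment around the current iterate $x_k$ to obtain a \emph{localized} gradient-Lipschitz inequality: for $\|y-x_k\|\le r$ one has $\|\nabla f(y)-\nabla f(x_k)\|\le(L_0+L_1\|\nabla f(x_k)\|)\|y-x_k\|$ up to a benign $e^{L_1 r}$ factor. This localization is exactly what makes the clipping $\bar\beta_k=\min\{\beta_k,\beta_k/\|\widehat\nabla f(x_k)\|\}$ natural, since the actual step length $\bar\beta_k\|\widehat\nabla f(x_k)\|$ is capped by $\beta_k$ regardless of how large the stochastic gradient is.

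Next I would derive a one-step descent inequality by Taylor-expanding $f(x_{k+1})-f(x_k)$ with $x_{k+1}=x_k-\bar\beta_k\widehat\nabla f(x_k)$ and substituting the localized smoothness bound, yielding
\[
f(x_{k+1})\le f(x_k)-\bar\beta_k\langle\nabla f(x_k),\widehat\nabla f(x_k)\rangle+\tfrac12\bar\beta_k^{2}\bigl(L_0+L_1\|\nabla f(x_k)\|\bigr)\|\widehat\nabla f(x_k)\|^{2}.
\]
Splitting the inner product into $\|\nabla f(x_k)\|^{2}$ plus a bias term and invoking the almost-sure bound $\|\widehat\nabla f(x_k)-\nabla f(x_k)\|\le\tau$, the bias is controlled by $\tau\|\nabla f(x_k)\|$. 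For the quadratic term I would split on whether $\|\widehat\nabla f(x_k)\|\le 1$ (so $\bar\beta_k=\beta_k$ and the $L_1$-contribution is dominated by $\|\nabla f(x_k)\|^2$ via Cauchy--Schwarz) or $\|\widehat\nabla f(x_k)\|>1$ (so $\bar\beta_k^{2}\|\widehat\nabla f(x_k)\|^{2}=\beta_k^{2}$ exactly, thanks to clipping, and $L_1\|\nabla f(x_k)\|$ contributes only a benign $\beta_k^{2}L_1\|\nabla f(x_k)\|$ term that Young's inequality absorbs).

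I would then telescope, using lower boundedness of $f$ and the Robbins--Monro conditions $\sum_k\beta_k=\infty$, $\sum_k\beta_k^{2}<\infty$ from Assumption 1 to conclude
\[
\sum_{k=0}^{K-1}\bar\beta_k\|\nabla f(x_k)\|^{2}\le f(x_0)-f^{\star}+\sum_{k=0}^{K-1}\bigl(\bar\beta_k\tau\|\nabla f(x_k)\|+C\bar\beta_k^{2}(L_0+L_1\|\nabla f(x_k)\|)\bigr),
\]
and then, after absorbing the linear-in-gradient terms into the left-hand side via Young's inequality and taking $K\to\infty$, force $\|\nabla f(x_k)\|\to 0$ almost surely. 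A standard Borel--Cantelli/supermartingale argument upgrades the summability to almost-sure convergence and yields a finite iteration complexity to reach any target gradient accuracy $\varepsilon$.

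The main obstacle is the interplay between the clipping event and the noise bound. Under $L_0$--$L_1$ smoothness the ``constant'' multiplying the quadratic step term itself scales with $\|\nabla f(x_k)\|$, so a naive substitution $\|\widehat\nabla f(x_k)\|\le\|\nabla f(x_k)\|+\tau$ introduces a cubic-in-gradient term that the step size alone cannot damp. Resolving this requires simultaneously exploiting (i) that clipping caps $\bar\beta_k\|\widehat\nabla f(x_k)\|\le\beta_k$ independent of gradient size, and (ii) that the \emph{almost-sure} (rather than merely in-expectation) noise bound lets one transfer between $\widehat\nabla f$ and $\nabla f$ pathwise inside the smoothness term; the delicate bookkeeping between these two ingredients is what distinguishes the Zhang et al.\ argument from classical SGD proofs, and is where the main care is needed.
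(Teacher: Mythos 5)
The paper does not prove this lemma at all: it is imported verbatim from \citet{zhang2019gradient} as an external result, so there is no in-paper argument to compare against. Your sketch is, in outline, a faithful reconstruction of the cited source's proof --- the exponentially-localized descent lemma for $(L_0,L_1)$-smooth functions, the one-step Taylor bound, the case split on whether the clipping is active, and the telescoping under the Robbins--Monro conditions are all the right ingredients, and you correctly identify the coupling between the clipping event and the noise bound as the crux.

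There is, however, one concrete gap in the way you close the argument. After writing $\langle\nabla f(x_k),\widehat\nabla f(x_k)\rangle \ge \|\nabla f(x_k)\|^2 - \tau\|\nabla f(x_k)\|$ pathwise, you propose to absorb the bias term $\bar\beta_k\tau\|\nabla f(x_k)\|$ into the left-hand side via Young's inequality. But Young leaves a residual $\tfrac12\bar\beta_k\tau^2$, and since $\sum_k\bar\beta_k=\infty$ by assumption, the right-hand side of your telescoped inequality diverges; the argument as written only forces $\liminf_k\|\nabla f(x_k)\|\lesssim\tau$, i.e.\ convergence to an $O(\tau)$-neighborhood of stationarity, not $\|\nabla f(x_k)\|\to 0$. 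To get the stated conclusion you must either (i) invoke unbiasedness of $\widehat\nabla f$ so that the cross term is a martingale difference in expectation --- at which point you must confront the fact that $\bar\beta_k$ is itself a function of $\widehat\nabla f(x_k)$ through the clipping, so $\mathbb{E}[\bar\beta_k\langle\nabla f,\widehat\nabla f\rangle]\ne\bar\beta_k\|\nabla f\|^2$, which is precisely the bookkeeping you flag but do not carry out --- or (ii) weaken the conclusion to a $\tau$-dependent stationarity level, which is in fact closer to what the stochastic theorems in \citet{zhang2019gradient} actually guarantee. As a secondary point, ``converges almost surely'' and ``finite iteration complexity'' are different kinds of guarantees (asymptotic with diminishing steps versus nonasymptotic with an $\varepsilon$-dependent constant step), and your plan conflates them; the Borel--Cantelli/supermartingale step you allude to delivers the former but not the latter.
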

Then we continue to finish the multi-timescale convergence:
\begin{remark}
In each timescale, we prove two facts, (1) the stochastic error is bounded, so the clipped gradient descent will converge. (2) the converged point is a stationary point. \citep{bhatnagar2009natural,bhatnagar2012online,chow2017risk}.
\end{remark}
\noindent\textbf{Timescale 1} (Convergence of $\theta$ update). 

\noindent\textbf{Bounded error.} As the random variable $G_\theta$ depends on the state-action pair sampled from replay buffer, so in the following derivation, it is denoted as $G_\theta(s, a)$. For the SGD using sampled $(s_k, a_k)$ at $k^{\text{th}}$ step, the stochastic gradient is denoted by $G_{\theta_k}(s_k, a_k)$. The error term with respect to $\theta$ is computed by
\begin{equation}
	\begin{aligned}
		\delta \theta_{k} = G_{\theta_k}(s_k, a_k) - \mathbb{E}_{s\sim f^{\pi_{\theta}}}\big\{\mathbb{E}_{a\sim\pi_\theta} G_{\theta_k}(s, a)\big\}
	\end{aligned}
\end{equation}
Therefore, the error term is bounded by
\begin{equation}
	\begin{aligned}
		&\left\|\delta \theta_{k}\right\|^{2}\\ \leq &
		2 \left\|f^{\pi_\theta}(s)\pi(a|s)\right\|_\infty^2\left(\left\|G_{\theta_k}(s, a)\right\|_\infty^2+|G_{\theta_k}(s_k,a_k)|^2\right)\\
		\leq & 6 \left\|f^{\pi_\theta}(s)\pi(a|s)\right\|_\infty^2\left\|G_{\theta_k}(s, a)\right\|_\infty^2
	\end{aligned}
	\label{eq: sequaretheta}
\end{equation}
where $f^{\pi_\theta}$ is the state distribution density under $\pi_\theta$. As we assume the state and action are sampled from a closed set and the continuity of neural network, the upper bound is valid. According to Lemma \ref{lemma:clippedsgd} and invoking Theorem 2 in Chapter 2 of Borkar's book \cite{borkar2009stochastic}, the optimization converges to a fixed point $\theta^*$ almost surely (for given $\xi, \zeta$).

\noindent\textbf{Stationary point $\theta^*$.} Then we show that the fixed point $\theta^*$ is a stationary point using Lyapunov analysis. The analysis of the fastest timescale, $\theta$ update, is rather easy, but it is helpful for the similar analyses in the next two timescales. According to \citet{borkar2009stochastic}, we can regard the stochastic optimization of $\theta$ as a stochastic approximation of a dynamic system for given $\xi, \zeta$:
\begin{equation}
	\dot \theta = - \nabla_\theta \Ll'(\theta, \xi, \zeta)
	\label{eq:odetheta}
\end{equation}
\begin{proposition}
	consider a Lyapunov function for dynamic system \eqref{eq:odetheta}: 
	\begin{equation}
		L_{\xi, \zeta}(\theta)=\Ll'\left(\theta, \xi, \zeta\right)-\Ll'\left( \theta^{*}(\xi, \zeta), \xi, \zeta\right)
	\end{equation}
	where $\theta^*(\xi, \zeta)$ is a local minimum for given $\xi, \zeta$. In order to show that $\theta^*$ is a stationary point, we need
	\begin{equation}
		dL_{\xi, \zeta}(\theta)/dt\leq0
	\end{equation}
\end{proposition}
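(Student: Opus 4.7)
The plan is to carry out a standard Lyapunov analysis on the frozen-parameter ODE $\dot\theta = -\nabla_\theta \Ll'(\theta,\xi,\zeta)$ from~\eqref{eq:odetheta}, treating $\xi$ and $\zeta$ as constants because $\theta$ evolves on the fastest timescale under Assumption~1. The goal is to show the Lyapunov derivative is non-positive and then invoke Lyapunov stability to conclude that $\theta^*(\xi,\zeta)$ is an asymptotically stable equilibrium, hence a stationary point of the dynamics.

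First I would differentiate $L_{\xi,\zeta}(\theta(t))$ along trajectories using the chain rule. Since the second term $\Ll'(\theta^*(\xi,\zeta),\xi,\zeta)$ does not depend on $\theta$, its gradient vanishes and $\nabla_\theta L_{\xi,\zeta}(\theta) = \nabla_\theta \Ll'(\theta,\xi,\zeta)$. Substituting the ODE yields
\[
\frac{dL_{\xi,\zeta}(\theta)}{dt} = \bigl\langle \nabla_\theta L_{\xi,\zeta}(\theta),\, \dot\theta\bigr\rangle = -\bigl\|\nabla_\theta \Ll'(\theta,\xi,\zeta)\bigr\|^{2} \leq 0,
\]
with equality precisely at stationary points where $\nabla_\theta \Ll' = 0$. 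Next, because $\theta^*(\xi,\zeta)$ is a local minimum of $\Ll'(\cdot,\xi,\zeta)$, one has $L_{\xi,\zeta}(\theta)\geq 0$ in a neighborhood of $\theta^*$ and $L_{\xi,\zeta}(\theta^*)=0$, so $L_{\xi,\zeta}$ is genuinely a local Lyapunov function. LaSalle's invariance principle applied to the largest invariant set within $\{\nabla_\theta\Ll'=0\}$ then isolates $\theta^*$ locally, giving local asymptotic stability.

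Finally I would connect the continuous-time conclusion back to the discrete stochastic update. Combined with the bounded-noise estimate~\eqref{eq: sequaretheta} derived earlier in the timescale~1 argument, the Kushner--Clark / Borkar stochastic-approximation theorem (Theorem~2, Chapter~2 of \citet{borkar2009stochastic}) ensures that the iterates $\theta_k$ track the ODE flow almost surely, and hence converge to $\theta^*(\xi,\zeta)$.

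The main obstacle I anticipate is not the Lyapunov algebra itself, which is essentially the routine identity $\dot L = -\|\nabla \Ll'\|^2$. Rather, the subtle points are (i) justifying that $\xi,\zeta$ can be legitimately treated as frozen, which rests on the timescale-separation hypothesis $\beta_\xi=o(\beta_\theta)$ and requires care because $\theta^*(\xi,\zeta)$ is a moving target set-valued map; and (ii) dealing with the nonconvexity of $\Ll'$ in $\theta$ through the policy network, which is why $\theta^*$ can only be guaranteed to be a \emph{local} minimum and why the final Theorem~\ref{theorem:major} only asserts convergence to a locally optimal tuple.
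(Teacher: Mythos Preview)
Your proposal is correct and follows essentially the same approach as the paper: the paper's proof is the single-line computation $\frac{dL_{\xi,\zeta}(\theta)}{dt}=-\|\nabla_\theta\Ll'(\theta,\xi,\zeta)\|^2\leq 0$, with equality only at stationary points. Your version is more fleshed out (invoking LaSalle and spelling out the local-minimum and ODE-tracking justifications), but the core argument is identical.
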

\begin{proof}
	We have 
	\begin{equation}
		\frac{d L_{\xi, \zeta}(\theta)}{d t}=-\left\|\nabla_{\theta} \Ll'(\theta, \xi, \zeta)\right\|^{2} \leq 0 \label{eq:descent}
	\end{equation}
	The equality holds only when $\Ll'(\theta, \xi, \zeta) =0$.\footnote{Similar convergence proof in \citet{chow2017risk} assumes that $\theta\in\Theta$ is a compact set, so they spend lots of effort to analyze the case when the $\theta$ reaches the boundary of $\Theta$. However, the proposed clipped SGD has released the requirements of compact domain of $\theta$, so the Lyapunov analysis becomes easier for the first timescale.} 	
\end{proof}
Combined with the conclusion of convergence, $\{\theta_k\}$ converges almost surely to a local minimum point $\theta^*$ for given $\xi$.

\vspace{5pt}
\noindent\textbf{Timescale 2} (Convergence of $\lambda$ update).

\vspace{2pt}
\noindent\textbf{Bounded Error.}

The error term of the $\xi$ update,
\begin{equation}
	G_{\xi_k}(s_k, a_k) - \mathbb{E}_{s}\big\{\mathbb{E}_{a} G_{\xi_k}(s, a)\big\}
\end{equation}
includes two parts: 
\begin{enumerate}
	\item $\delta\theta_\epsilon$  cased by inaccurate update of $\theta$ ($\theta$ should converge to $\theta^*(\xi, \zeta)$ in Timescale 1, but to $\theta_k$ near $\theta^*(\xi, \zeta)$):
	\begin{equation}
		\begin{aligned}
			\delta \theta_\epsilon = & {\hat \nabla}_{\xi} \Ll'(\theta_k, \xi, \zeta) - {\hat \nabla}_{\xi} \Ll'(\theta^*(\xi, \zeta), \xi, \zeta)\\
			= & \left(Q^{\pi_{\theta_k}}(s_k,a_k) - Q^{\pi_{\theta^*}}(s_k,a_k)\right)\nabla_\xi\lambda_\xi(s_k)\\
			= & \left(\nabla_aQ(s_k, a_k)\nabla_\theta\pi(s_k)\epsilon_{\theta_k} +o(\left\|\epsilon_{\theta_k}\right\|) \right)\nabla_\xi\lambda_\xi(s_k)
		\end{aligned}
		\label{eq:thetaerrorts2}
	\end{equation}
	Therefore, $\|\delta\theta_\epsilon\|\to 0$ as $\|\epsilon_\theta\|\to0$. The error is bounded since $\|\epsilon_\theta\|$ is a small error, where there must exists a positive scalar $\epsilon_0$ s.t. $\|\epsilon_\theta\|\leq \epsilon_0$. 
	\item $\delta \xi_{k}$ caused by estimation error of $\xi$:
	\begin{equation}
		\delta \xi_{k} = G_{\xi_k}(s_k, a_k) - \mathbb{E}_{s\sim f^{\pi_\theta}}\big\{\mathbb{E}_{a\sim\pi_\theta} G_{\xi_k}(s, a)\big\}
	\end{equation}
	\begin{equation}
	    \left\|\delta \xi_{k}\right\|^{2}\leq 
				4 \left\|f^{\pi_\theta}(s)\pi(a|s)\right\|_\infty^2\left(\max \left\|Q_\phi\left(s, a\right)\right\|^2+d^2\right)\left\|\nabla_\xi\lambda_\xi(s_t)\right\|_\infty^2
	\end{equation}
	Similar to the analysis of Timescale 1 with compact domain of $s_k$, we can get the valid upper bound.
\end{enumerate}

We again use Lemma \ref{lemma:clippedsgd} and Theorem 2 in Chapter 6 in \citet{borkar2009stochastic} to show that the sequence $\{\xi_k\}$ converges to the solution of the following ODE:
\begin{equation}
	\dot{\xi} = - \nabla_\xi \Ll'(\theta^*(\xi, \zeta), \xi, \zeta)
\end{equation}

\noindent\textbf{Stationary point.} Then we show that the fixed point $\xi^*$ is a stationary point using Lyapunov analysis. Note that we have to take $\epsilon_\theta$ into considerations.
\begin{proposition}
	For the dynamic system with the error term
	\begin{equation}
		\dot{\xi} = - \nabla_\xi\Ll'(\theta^*(\xi, \zeta)+\epsilon_{\theta}, \xi, \zeta)
	\end{equation}
	Define a Lyapunov function to be
	\begin{equation}
		L_\zeta(\xi)=\Ll'(\theta^*(\xi, \zeta), \xi, \zeta)-\Ll'\left( \theta^{*}(\xi, \zeta), \xi^{*}(\zeta), \zeta\right)
	\end{equation}
	where $\xi^*$ is a local maximum point. Then $\frac{d	L_\zeta(\xi)}{dt}\leq 0$.
	\label{prop:lyapunov2}
\end{proposition}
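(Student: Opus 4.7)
The plan is to adapt the Lyapunov argument already carried out in Timescale 1, but now with two added subtleties: (i) $\mathcal{L}'$ depends on $\xi$ both directly and indirectly through the inner minimizer $\theta^*(\xi,\zeta)$, and (ii) the inner loop only tracks $\theta^*(\xi,\zeta)$ up to the perturbation $\epsilon_\theta$. Since the second timescale treats $\zeta$ as fixed, the reference value $\mathcal{L}'(\theta^*(\xi^*(\zeta),\zeta),\xi^*(\zeta),\zeta)$ plays the role of a constant baseline, so differentiating $L_\zeta(\xi)$ reduces to differentiating its first term along the flow.

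First, I would apply the chain rule to get
\begin{equation*}
\frac{dL_\zeta(\xi)}{dt} = \frac{d}{dt}\mathcal{L}'(\theta^*(\xi,\zeta),\xi,\zeta) = \Bigl(\nabla_\theta\mathcal{L}'\cdot\tfrac{\partial \theta^*}{\partial \xi} + \nabla_\xi\mathcal{L}'\Bigr)\dot{\xi}.
\end{equation*}
Next, I would invoke an envelope-theorem-style identity: by Timescale 1, $\theta^*(\xi,\zeta)$ is a local minimizer of $\mathcal{L}'$ with respect to $\theta$ for fixed $(\xi,\zeta)$, so $\nabla_\theta\mathcal{L}'(\theta^*(\xi,\zeta),\xi,\zeta)=0$. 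This kills the indirect dependence through $\theta^*$, leaving
\begin{equation*}
\frac{dL_\zeta(\xi)}{dt} = \nabla_\xi\mathcal{L}'(\theta^*(\xi,\zeta),\xi,\zeta)^\top \dot{\xi}.
\end{equation*}

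Then I would substitute the perturbed dynamics $\dot{\xi} = -\nabla_\xi\mathcal{L}'(\theta^*(\xi,\zeta)+\epsilon_\theta,\xi,\zeta)$ and Taylor-expand in $\epsilon_\theta$, using the smoothness guaranteed by Assumption \ref{assump:compact}:
\begin{equation*}
\nabla_\xi\mathcal{L}'(\theta^*+\epsilon_\theta,\xi,\zeta) = \nabla_\xi\mathcal{L}'(\theta^*,\xi,\zeta) + \nabla^2_{\theta\xi}\mathcal{L}'\cdot\epsilon_\theta + o(\|\epsilon_\theta\|).
\end{equation*}
Inserting this gives
\begin{equation*}
\frac{dL_\zeta(\xi)}{dt} = -\bigl\|\nabla_\xi\mathcal{L}'(\theta^*,\xi,\zeta)\bigr\|^2 - \nabla_\xi\mathcal{L}'^\top\nabla^2_{\theta\xi}\mathcal{L}'\,\epsilon_\theta + o(\|\epsilon_\theta\|),
\end{equation*}
in which the leading term is manifestly nonpositive. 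Because the learning-rate assumption forces $\beta_\xi=o(\beta_\theta)$, along the slow $\xi$ trajectory $\epsilon_\theta$ shrinks to $0$ and the perturbation is eventually dominated by $-\|\nabla_\xi\mathcal{L}'\|^2$, yielding $dL_\zeta(\xi)/dt\leq 0$ with equality only at a stationary point of $\xi\mapsto\mathcal{L}'(\theta^*(\xi,\zeta),\xi,\zeta)$.

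The main obstacle I expect is the perturbation bookkeeping: making sure the residual $\epsilon_\theta$ term cannot flip the sign of $dL_\zeta/dt$ in a uniform neighborhood of $\xi^*$. This requires a Hessian-boundedness argument (available from the $L_0$-$L_1$ smoothness and the compactness assumption), plus a careful statement that the two-timescale separation really does drive $\|\epsilon_\theta\|$ below any threshold before $\xi$ moves appreciably; the envelope-style vanishing of $\nabla_\theta\mathcal{L}'(\theta^*,\xi,\zeta)$ must also be justified either through first-order optimality at an interior local minimum or, if the clipped SGD introduces a small residual gradient, by absorbing that residual into the $\epsilon_\theta$ term. Once these pieces are in place, Lyapunov descent combined with LaSalle's invariance principle completes the convergence of $\xi_k$ to a local stationary point $\xi^*(\zeta)$, in direct parallel to the Timescale 1 argument.
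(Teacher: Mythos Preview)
Your proposal is correct and follows essentially the same route as the paper: compute $dL_\zeta/dt = \dot\xi^\top \nabla_\xi \mathcal{L}'(\theta^*,\xi,\zeta)$, expand the perturbed gradient to obtain $-\|\nabla_\xi\mathcal{L}'\|^2$ plus a cross term of order $\|\epsilon_\theta\|\,\|\nabla_\xi\mathcal{L}'\|$, and invoke the timescale separation $\beta_\xi = o(\beta_\theta)$ to conclude nonpositivity. The only cosmetic difference is that the paper bounds the cross term via the explicit expression for $\delta\theta_\epsilon$ computed earlier (yielding a concrete constant $K_1$ from $\nabla_a Q$, $\nabla_\theta\pi$, and $\nabla_\xi\lambda_\xi$), whereas you use an abstract mixed-Hessian bound from the $L_0$--$L_1$ smoothness; your explicit envelope-theorem step is in fact more careful than the paper, which leaves that chain-rule cancellation implicit.
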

\begin{proof}
	The proof is similar to Proposition 2; only the error of $\theta$ should be considered. We prove that the error of $\theta$ does not affect the decreasing property here:
	\begin{small}
		\begin{equation}
			\begin{aligned}
				&\frac{dL_{\zeta}(\xi)}{dt} =  - \left(\nabla_\xi \Ll'(\theta^*(\xi, \zeta)+\epsilon_{\theta}, \xi, \zeta)\right)^T \nabla_\xi \Ll'(\theta^*(\xi, \zeta), \xi, \zeta) \\
				= & - \left\|\nabla_\xi \Ll'(\theta^*(\xi, \zeta), \xi, \zeta)\right\|^2 - \delta\theta_\epsilon^T \nabla_\xi \Ll'(\theta^*(\xi, \zeta), \xi, \zeta)\\
				\leq & - \left\|\nabla_\xi\Ll'(\theta^*(\xi, \zeta), \xi, \zeta)\right\|^2 + K_1 \left\|\epsilon_\theta\right\|\left\|\nabla_\xi \Ll'(\theta^*(\xi, \zeta), \xi, \zeta)\right\|\\
			\end{aligned}
		\end{equation}
	\end{small}
	where $K_1 = \|\nabla_aQ(s_k, a_k)\nabla_\theta\pi(s_k)\epsilon_{\theta_k}\|_\infty\|\nabla_\xi\lambda_\xi(s_k)\|_\infty\leq\infty$ according to \eqref{eq:thetaerrorts2} since the compact domain of $s_k,a_k$ according to Assumption \ref{assump:compact}.
	As $\theta$ converges much faster than $\xi$ according to the multiple timescale convergence in \citet{borkar2009stochastic}, we get $dL_{\zeta}(\xi)/dt\leq0$. Therefore, there exists trajectory $\xi(t)$ converges to $\xi^*$ if initial state $\xi_0$ starts from a ball $\mathcal{D}_{\xi^*}$ around $\xi^*$ according to the asymptotically stable systems.
\end{proof}
\noindent\textbf{Local saddle point of $(\theta^*, \xi^*)$.} One side of the saddle pioint, $\Ll'(\theta^*(\xi, \zeta), \xi^*(\zeta), \zeta)\leq \Ll'\left( \theta, \xi^*(\zeta), \zeta\right)$ are already provided in previous, so we need to prove here $\Ll'\left( \theta^*(\xi, \zeta), \xi^*(\zeta), \zeta\right)\geq \Ll'\left( \theta^*(\xi, \zeta), \xi, \zeta\right)$. 
To complete the proof we need that 
\begin{equation}
	Q_\phi(s,a)\leq d \text{ and } \lambda^*(s)(Q_\phi(s,a)-d)=0
\end{equation}
for all $s$ in $\Ss_f$, and $a$ sampled from $\pi_{\theta^*(\xi, \zeta)}$. Recall that $\lambda^*$ is a local maximum point, we have
\begin{equation}
	\nabla_\xi \Ll'\left( \theta^*(\xi, \zeta), \xi^*(\zeta), \zeta\right) = 0
\end{equation}
Assume there exists $s_t$ and action $a_t$ sampled from $\pi^*(s_t)$ so that $Q_\phi(s_t,a_t) > 0$. Then for $\lambda^*$ we have 
\begin{equation}
	\nabla_\xi\Ll'\left( \theta^*(\xi, \zeta), \xi^*(\zeta), \zeta\right)= d_\gamma^{\pi_\theta^*}(s_t)Q_\phi(s_t,a_t)\nabla_\xi\lambda_\xi(s_t) \neq 0
\end{equation}
The second part only requires that $\lambda^*(s_t)=0$ when $Q_\phi(s_t,a_t) < 0$. Similarly, we assume that there exists $s_t$ and $\xi^*$ where $\lambda_{\xi^*}(s_t) > 0$ and $Q_\phi(s_t,a_t) < d$. 
there must exists a $\xi_0$ subject to
\begin{equation}
	\xi_0 = \xi^* + \eta_0 f^{\pi_{\theta^*}}(s_t)\left(Q_\phi(s_t,a_t) - d\right)\nabla_\xi\lambda_\xi(s_t)
\end{equation}
for any $\eta\in(0,\eta_0]$ where $\eta_0\geq0$. It contradicts the statement the local maximum $\xi^*$.
Then we get
\begin{equation}
	\begin{aligned}
		& \Ll'\left( \theta^*(\xi, \zeta), \xi^*(\zeta), \zeta\right)\\
		=&  J_r(\theta^*) + \mathbb E_{s}\left\{\lambda_{\xi^*}(s)\mathbb{E}_{a\sim\pi}\{Q_\phi(s,a)\}\right\} =J_r(\theta^*)  \\
		\geq&   J_r(\theta^*) +  \mathbb E_{s}\left\{\lambda_{\xi}(s)\mathbb{E}_{a\sim\pi}\{Q_\phi(s,a)\}\right\} \\
		=&  \Ll'\left( \theta^*(\xi, \zeta), \xi, \zeta\right)
	\end{aligned}
\end{equation}
So $(\theta^*, \xi^*)$ is a locally saddle point for given safety index parameters $\zeta$.

\vspace{5pt}
\noindent \textbf{Timescale 3} (Convergence of $\zeta$ update).
\vspace{2pt}

\noindent\textbf{Bounded error.} Similar to Timescale 2, the error of $\zeta$ update includes two parts
\begin{enumerate}
	\item $\delta\theta_{\epsilon}+\delta\xi_{\epsilon}$ caused by inaccurate update of $\theta, \xi$:
	\begin{equation}
		\begin{aligned}
			& \delta\theta_{\epsilon}+\delta\xi_{\epsilon}\\
			= & \lambda_{\xi_k}(s_k)\nabla_\xi \Delta\phi^{\pi_{\theta_k}}(s_k) - \lambda_{\xi^*}(s_k)\nabla_\zeta \Delta\phi^{\pi_{\theta^*}}(s_k)\\
			= & \lambda_{\xi_k}(s_k)\nabla_\zeta \Delta\phi^{\pi_{\theta_k}}(s_k) -\lambda_{\xi_k}(s_k)\nabla_\zeta \Delta\phi^{\pi_{\theta^*}}(s_k) \\
			& + \lambda_{\xi_k}(s_k)\nabla_\zeta \Delta\phi^{\pi_{\theta^*}}(s_k) -  \lambda_{\xi^*}(s_k)\nabla_\zeta \Delta\phi^{\pi_{\theta^*}}(s_k)\\
			= & \left(\lambda_{\xi_k}(s_k) - \lambda_{\xi^*}(s_k)\right)\nabla_\zeta \Delta\phi^{\pi_{\theta^*}}(s_k)
		\end{aligned}
		\label{eq:zetaerror}
	\end{equation}
	The first part after the second equal sign is neglected since $\theta$ permutation does not affect on the gradient of $\zeta$. Similar to derivation in \eqref{eq:thetaerrorts2}, we get $|\delta\theta_{\epsilon}+\delta\xi_{\epsilon}|\rightarrow 0$ as $(\theta, \xi)\rightarrow(\theta^*, \xi^*)$.
	\item $\delta \zeta_{k}$ caused by estimation error of $\zeta$:
	\begin{equation}
		\delta\zeta_{k} = \lambda_{\xi^*}(s_k)\nabla_\zeta \Delta\phi^{\pi_{\theta^*}}(s_k) - \mathbb{E}\left\{\lambda_{\xi^*}(s)\nabla_\zeta \Delta\phi^{\pi_{\theta^*}}(s)\right\}
	\end{equation}
	The bounded error can be obtained by
	\begin{equation}
		\begin{aligned}
			&\left\|\delta \zeta_{k}\right\|^{2}\\ 
			\leq &
			4 \left\|f^{\pi_\theta}(s)\pi(a|s)\right\|_\infty^2\max_{s\in\complement_{\mathcal{D}}\Ss_f}|\lambda_{\xi^*}(s)|^2\|\nabla_\zeta \Delta\phi^{\pi_{\theta^*}}(s)\|^2\\
		\end{aligned}
	\end{equation}
\end{enumerate}

Therefore, the $\zeta$-update is a stochastic approximation of the continuous system $\zeta(t)$, described by the ODE
For the dynamic system
\begin{equation}
	\dot{\zeta} = - \nabla_\zeta \Ll'(\theta, \xi, \zeta)|_{\theta = \theta^*(\xi,\zeta)+\epsilon_{\theta}, \xi = \xi^*(\zeta)+\epsilon_{\xi}}
\end{equation}
\noindent\textbf{Stationary point.} Define a Lyapunov function
\begin{equation}
	L(\zeta)= \Ll'\left(\theta^*(\xi,\zeta), \xi^*(\zeta), \zeta\right)- \Ll'\left( \theta^{*}, \xi^{*}, \zeta^*\right)
\end{equation}
where $\zeta^*\in Z$ is a local minimum point.
Then
\begin{equation}
	\begin{aligned}
		& \frac{dL(\zeta)}{dt} \\
		= & \small{\left(- \nabla_\zeta \Ll'(\theta^*(\xi,\zeta)+\epsilon_{\theta}, \xi^*(\zeta)+\epsilon_{\xi}, \zeta) \right)^T \nabla_\zeta \Ll'(\theta^*(\xi,\zeta), \xi^*(\zeta), \zeta)} \\
		\leq & -  \left\|\nabla_\zeta \Ll'(\theta^*(\xi,\zeta), \xi^*(\zeta), \zeta)\right\|^2 \\
		& + K_2 \left\|\epsilon_\xi\right\| \left\|\nabla_\zeta \Ll'(\theta^*(\xi,\zeta), \xi^*(\zeta), \zeta)\right\|
	\end{aligned}
\end{equation}
where $K_2=\|\nabla_\xi \lambda_\xi(s)\|_\infty\|\nabla_\zeta \Delta\phi^{\pi_{\theta^*}}(s)\|_\infty$ according to \eqref{eq:zetaerror}. The derivation is very similar to the conclusion in Proposition \ref{prop:lyapunov2}; the upper bound is valid since the compact domain of state and action.
As $\xi$ converges faster than $\zeta$, $dL_{\zeta}(\xi)/dt\leq0$, so there exists trajectory $\xi(t)$ converges to $\zeta^*$ if initial state $\zeta_0$ starts from a ball $\mathcal{D}_{\zeta^*}$ around $\zeta^*$ according to the asymptotically stable systems. 

Finally, we can conclude that the sequence $(\theta_k, \xi_k, \zeta_k)$, will converge to a locally optimal policy and multiplier tuple $(\theta^*, \xi^*)$ for a locally optimal safety index parameters, $\zeta^*$.

\section{Implementation Details}
\subsection{Codebase and Platforms}
Implementation of FAC-SIS, FAC with $\phi_h$ and $\phi_F$ are based on the Parallel Asynchronous Buffer-Actor-Learner (PABAL) architecture proposed by \citet{duan2021distributional}.\footnote{\url{https://github.com/mahaitongdae/Safety_Index_Synthesis}} All experiments are implemented on Intel Xeon Gold 6248 processors with 12 parallel actors, including 4 workers to sample, 4 buffers to store data and 4 learners to compute gradients. Implementation of other baseline algorithms are based on the code released by \citet{ray2019benchmarking}\footnote{\url{https://github.com/openai/safety-starter-agents}} and also a modified version of PPO\footnote{\url{https://github.com/ikostrikov/pytorch-a2c-ppo-acktr-gail}}.
\label{sec:implementationdetail}

\subsection{Baseline Algorithms}
The only difference between baseline algorithms, FAC with $\phi_h,\ \phi_F$ is that no $\zeta$ update step in Algorithm \ref{alg:facspi}.
\label{sec:citerules}
\section{Hyperparameters}
The neural network design and detailed hyperparameters are listed in Table \ref{table:hyper}.
\label{sec:para}

\section{Additional Experimental Results}
\label{sec:addexp}
\subsection{Experiments in Other Safety Gym environments}
\label{sec:addexpgym}
We select six different Safety Gym environments, and the results of four of them are listed in the experiment section. The results with the rest of the environments are demonstrated here:
\begin{figure}[h]
	\centering
	\includegraphics[width=0.32\linewidth]{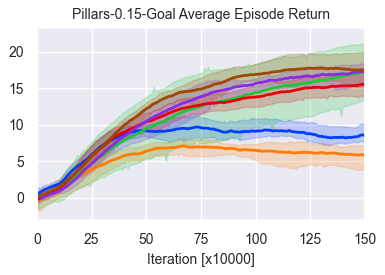}
	\includegraphics[width=0.32\linewidth]{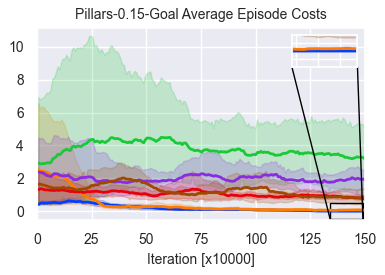}
	\includegraphics[width=0.32\linewidth]{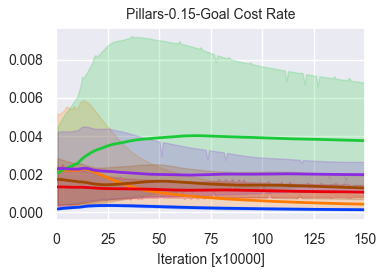}
	\includegraphics[width=0.32\linewidth]{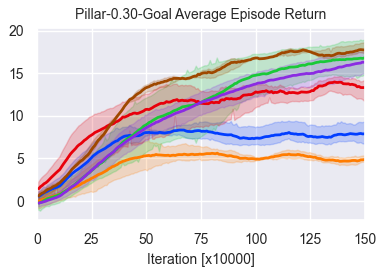}
	\includegraphics[width=0.32\linewidth]{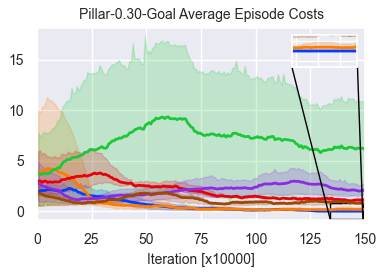}
	\includegraphics[width=0.32\linewidth]{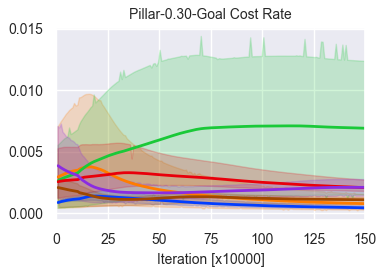}\\
	\subfigure{\includegraphics[width=0.7\linewidth]{fig/aaai_legends.png}}
	\caption{Average performance of FAC-SIS and baseline methods on 2 additional Safety Gym environments over five  seeds.}
	\label{fig:majorappendix}
\end{figure}

The results in the other two environments are consistent with the experiment section. The proposed FAC-SIS learns a safe policy with zero constraint violation, and other baseline algorithms all fail to neglect the cost even in the converged policies. Furthermore, the reward performance is better than the handcrafted safety index, or FAC with $\phi_h$.
\subsection{Custom Environment Details}
\label{sec:statedist}
To effectively scale the state distribution, we manually set an environment similar to the Safety Gym environments with \texttt{Point} robot, \texttt{Goal} task and \texttt{Hazard} obstacles shown in Figure \ref{fig:statedist}. The agent represented by the arrow should head to the red dot on the top while avoiding the hazard randomly located near the origin. The custom environment includes a static goal point at $(0, 5)$, a hazard with a radius of 0.5, and a point agent with two inputs, rotation and acceleration, the arrow represents the positive direction of the acceleration. The random initial positions (including positions, heading angles of the agent and the positions of hazard) design of three different distributions are listed in Table \ref{table:dist}. The reward design includes two parts that are the tracking error of the heading angle towards goal position, and speed relevant to the distance to the goal position (It requires that the agent always takes 5 seconds to reach the goal, so the closer the agent get to the goal, the slower its target speed is.).
\begin{figure}[htb]
	\centering
	\includegraphics[width=0.18\linewidth]{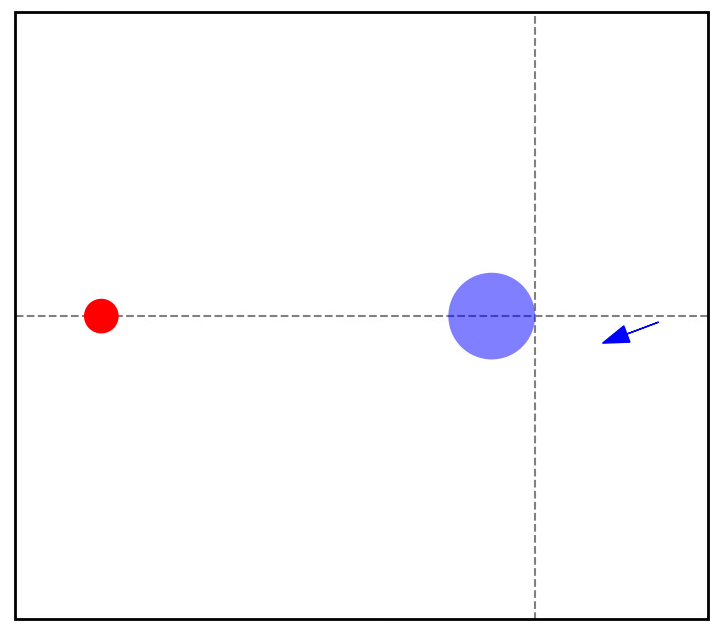}
	\includegraphics[width=0.24\linewidth]{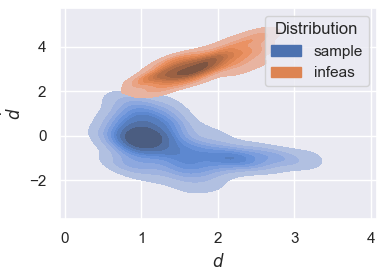}
	\includegraphics[width=0.24\linewidth]{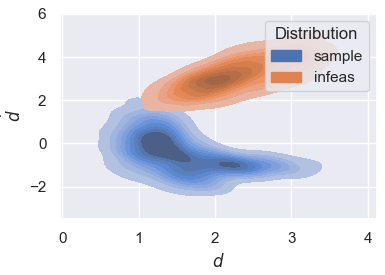}
	\includegraphics[width=0.24\linewidth]{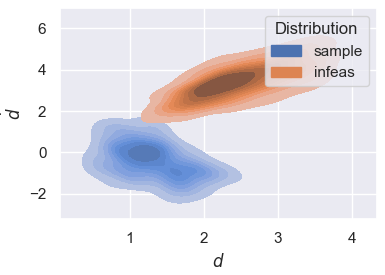}
	\caption{\small The custom environment and distributions of sampling and infeasible states under three different initialization setups. The overlap of the two distributions is very small, indicating that safe control exists for almost all sampled states.}
	\label{fig:statedist}
\end{figure}
The method to locate infeasible states is explained as follows. First, we discretize the state and action space with small intervals, and we exhaust all the actions to see if the energy can dissipate for each state. If not, we remark the specific state as infeasible.
\begin{table}[htbp]
\centering
	\begin{tabular}{@{}cccccc@{}}
		\toprule
		\multirow{2}{*}{\begin{tabular}[c]{@{}c@{}}Distribution\\ Index\end{tabular}} & \multicolumn{3}{c}{Agent Initial Position} & \multicolumn{2}{c}{Hazard Initial Position}       \\ \cmidrule(l){2-6} 
		& x            & y             & Angle   & \quad x\quad            & y          \\ \midrule
		1                                                                             & $0$          & $[-1.5,-1.0]$ & $[-\pi/4, \pi/4]$ & $0$ & $[0.5, 1]$\\
		2                                                                             & $0$          & $[-1.5,-0.5]$ & $[-\pi/4, \pi/4]$ &$0$&$[0.5, 1,5]$\\
		3                                                                             & $[-0.5,0.5]$ & $[-1.5,-0.5]$ & $[-\pi/4, \pi/4]$ &$0$&$[0.5, 1]$\\ \bottomrule
	\end{tabular}
	\caption{Initial distribution in custom environments. The initial position is sampled with a uniform distribution between the interval in the table.}
	\label{table:dist}
\end{table}

\subsection{Additional Results for Safety Index Synthesis}
\begin{table}[h]
	\centering
	\begin{tabular}{ccccc}
		\hline
		Parameters  & Notion   & $k$    & $\sigma$ & $n$   \\ \hline
		Synthesized & $\phi_{SIS}$ & 0.7821 & 0.0958   & 1.149 \\
		Handcrafted & $\phi_h$ & 1      & 0.3      & 2     \\
		Feasible    & $\phi_F$ & 1      & 0.04     & 2     \\
		Zero        & $\phi_0$ & 0      & 0        & 1     \\ \hline
	\end{tabular}
	\caption{Parameterization of Different Safety Index.}
	\label{table:paras}
\end{table}
We list the different safety index parameters in TABLE \ref{table:paras}. 
To be more specific, FAC-SIS has a single-direction update of the parameters by reducing the $k, \sigma$ and changing $n$ as shown in Figure \ref{fig:synthesis}.
\begin{figure}[h]
	\includegraphics[width=0.325\linewidth]{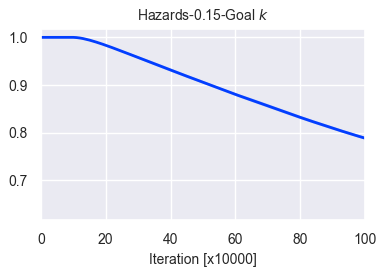}
	\includegraphics[width=0.325\linewidth]{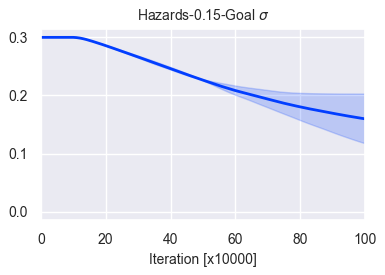}
	\includegraphics[width=0.325\linewidth]{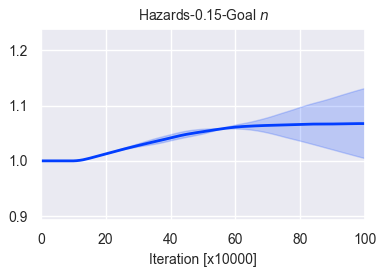}
	\caption{Learning curves of safety index parameters.}
	\label{fig:synthesis}
\end{figure}
 The trend can be explained by these two cases:
\begin{itemize}
	\item $\phi(s)\leq0$. Then the constraints, or the violation part are 
	\begin{equation}
		\phi(s')=\sigma + d_{min}^n-d^n - k \dot d\leq 0
		\label{eq:case1}
	\end{equation}
	Therefore, if we want to further optimize the violation part $J_\phi$, or the sum of LHS of the inequality in \eqref{eq:case1}, then we have:
	\begin{equation}
		\begin{array}{l}
			\partial J_\phi/\partial\sigma = 1>0\\
			\partial J_\phi/\partial n = n(d_{min}^{n-1}-d^{n-1})<0\\ 
			\partial J_\phi/\partial k = -\dot d
		\end{array}
	\end{equation}
	So, we should reduce $k$, increase $n$. The trend of $k$ depends on the $\dot d$.
	\item $\phi(s)>0$. If we further let $\eta_D=0$, then we have $\phi(s')\leq\phi(s)$. The violation part of this inequality constraints are:
	\begin{equation}
		d^n-d'^n+k\dot d-k\dot d'
		\label{eq:case2}
	\end{equation}
	we have 
	\begin{equation}
		\begin{array}{l}
			\partial J_\phi/\partial\sigma = 0\\
			\partial J_\phi/\partial n = n(d^{n-1}-d'^{n-1})\\ 
			\partial J_\phi/\partial k = \dot d-\dot d'
		\end{array}
	\end{equation}
	As the inequality constraints are violated, so at least there exists one positive term of $(d-d')$ and $(\dot d - \dot d')$. In other words, if we want to reduce the violation part \eqref{eq:case2}, whether that one of the $k$ or $n$ reduces, or they all reduce.
\end{itemize}
Therefore, the synthesis trends in Figure \ref{fig:synthesis} is reasonable.
\begin{table}[h]
    \centering
	\begin{tabular}{cccccc}
		\hline
		Metric                  & $\phi_0$ & $\phi_i$ & $\phi_{SIS}$ & $\phi_{F}$& $ \phi_{h}$  \\ \hline
		Success rate            & 0\%      & 85\%     & 99\%     & 100\% & 70\% \\
		$\phi_0$ violation rate & 100\%    & 0\%     & 0\%      & 0\%& 0\%\\
		Infeasible rate         & 100\%    &  15\%     & 1\%      & 0\%& 30\%\\
		Average Tracking Error  & -        & 2.974        & 3.378   & 3.456 & 3.528\\ \hline
	\end{tabular}
	\caption{Performance Comparison of Different Safety Indexes.}
	\label{table:per}
\end{table}

We add quantified metrics on the custom environment to compare the feasibility, safety, and optimality of different safety indexes shown in TABLE \ref{table:per}. We randomly simulate 100 trajectories to see if the safety index leads to the infeasibility of unsafe actions. If the agent does not violate $\phi_0$ (or stepping into hazards) or has infeasible states, then the trajectory is successful. $\phi_i$ is the initial safety index before synthesizing, and $\phi_{SIS}$ is the synthesized safety index. They both ensure safety but the $\phi_{SIS}$ increases the feasibility by the safety index synthesis. The $1\%$ difference might be caused by the mismatch of the state distributions between the custom environment and RL environments. $\phi_F$ indeed guarantees the best feasibility from the synthesis rules in \cite{anonymous2021modelfree}. Besides, the synthesized safety index also has slightly better optimality, although we do not intend to improve it. This could be explained by the synthesized safety index being less conservative since it only ensures the sampled region by RL is feasible but not all the state space like the synthesis rules in \cite{anonymous2021modelfree}.

\begin{table*}[htp]
	\vskip 0.15in
	\begin{center}
		\begin{tabular}{lc}
			\toprule
			Algorithm & Value \\
			\hline
			\emph{FAC-SIS, FAC w/ $\phi_h$, FAC w/ $\phi_F$} & \\
			\quad Optimizer &  Adam ($\beta_{1}=0.9, \beta_{2}=0.999$)\\
			\quad Approximation function  &Multi-layer Perceptron \\
			\quad Number of hidden layers & 2\\
			\quad Number of hidden units per layer & 256\\
			\quad Nonlinearity of hidden layer& ELU\\
			\quad Nonlinearity of output layer& linear\\
			\quad Actor learning rate & Linear annealing $3{\rm{e-}}5\rightarrow1{\rm{e-}}6 $\\
			\quad Critic learning rate & Linear annealing $8{\rm{e-}}5\rightarrow1{\rm{e-}}6 $\\
			\quad  Learning rate of multiplier net & Linear annealing $5{\rm{e-}}6\rightarrow5{\rm{e-}}6 $ \\
			\quad  Learning rate of $\alpha$ & Linear annealing $8{\rm{e-}}5\rightarrow8{\rm{e-}}6 $ \\
			\quad  Learning rate of safety index parameters (FAC-SIS only) & Linear annealing $8{\rm{e-}}6\rightarrow1{\rm{e-}}6 $ \\
			\quad Reward discount factor ($\gamma$) & 0.99\\
			\quad Policy update interval ($m_\pi$) & 3\\
			\quad  Multiplier ascent interval ($m_\lambda$)& 12\\
			\quad  SIS interval ($m_\phi$)& 24\\
			\quad Target smoothing coefficient ($\tau$) & 0.005\\
			\quad Max episode length ($N$) & \\
			\quad\quad Safety Gym task & 1000\\
			\quad\quad Custom task & 120 \\
			\quad  Expected entropy ($\overline{\mathcal{H}}$) &  $\overline{\mathcal{H}}=-$Action Dimensions \\
			\quad  Replay buffer size & $5\times10^5$\\
			\quad  Replay batch size & 256\\
			\quad Handcrafted safety index ($\phi_h$) hyperparameters $(\eta, n, k, \sigma)$ & $(0,\ 2,\ 1,\ 0.3)$\\
			\quad Feasible safety index ($\phi_F$) hyperparameters $(\eta, n, k, \sigma)$ & $(0,\ 2,\ 1,\ 0.04)$\\\midrule
			\emph{CPO, TRPO-Lagrangian} &\\ 
			\quad Max KL divergence&  $0.1$\\
			\quad Damping coefficient&  $0.1$\\
			\quad Backtrack coefficient&  $0.8$\\
			\quad Backtrack iterations&  $10$\\
			\quad Iteration for training values&  $80$\\
			\quad Init $\lambda$ &  $0.268 (softplus(0))$\\
			\quad GAE parameters  &  $0.95$\\
			\quad Batch size &  $2048$\\
			\quad Max conjugate gradient iterations & $ 10$ \\
			
			\hline
			\emph{PPO-Lagrangian} &\\ 
			\quad Clip ratio &  $0.2$\\
			\quad KL margin &  $1.2$\\
			\quad Mini Bactch Size & $64$\\
			\bottomrule
		\end{tabular}
	\end{center}
	\vskip -0.1in
	\caption{Detailed hyperparameters.}
	\label{table:hyper}
\end{table*}



\end{document}